\documentclass[sigconf,nonacm]{acmart}

\usepackage{amsmath,amsfonts}

\usepackage{amsthm}

\newtheorem*{prob_state}{\textbf{Problem Statement}}

\newtheorem{theorem}{\textbf{Theorem}}

\usepackage{float}

\usepackage{color}
\usepackage{tikz}
\usetikzlibrary{trees}
\usepackage{amsthm}
\usepackage{makecell}
\usepackage{tikz}
\usetikzlibrary{trees}
\usepackage{tabularx,colortbl}
\usepackage{threeparttable}
\usepackage{booktabs}
\usepackage{multirow}
\usepackage{subfig}
\usepackage{graphicx}
\usepackage{url}
\usepackage{enumitem}
\usepackage{tcolorbox}
\usepackage{xcolor}
\usepackage[ruled, vlined, linesnumbered]{algorithm2e}
\usepackage{hyperref}
\usepackage[noabbrev]{cleveref}
\crefname{equation}{Eq.}{Eqs.}
\usepackage{adjustbox}

\usepackage[normalem]{ulem}

\newcommand{\filledcircle}{\tikz\fill[black] (0,0) circle (.8ex);}
\newcommand{\emptycircle}{\tikz\draw (0,0) circle (.8ex);}

\definecolor{DeepPink}{HTML}{FF1493}
\definecolor{Orchid}{HTML}{DA70D6}
\definecolor{Magenta}{HTML}{FF00FF}
\definecolor{Fuchsia}{HTML}{FF00FF}
\definecolor{LavenderPink}{HTML}{FFB6C1}
\definecolor{verylightgray}{rgb}{0.9, 0.9, 0.9}
\definecolor{lightred}{rgb}{1,0.8,0.8}

\AtBeginDocument{%
	}

\begin{document}
	
	%%
	%% The "title" command has an optional parameter,
	%% allowing the author to define a "short title" to be used in page headers.
	\title{EVE: Efficient Verification of Data Erasure through Customized Perturbation in Approximate Unlearning}
	
	%%
	%% The "author" command and its associated commands are used to define
	%% the authors and their affiliations.
	%% Of note is the shared affiliation of the first two authors, and the
	%% "authornote" and "authornotemark" commands
	%% used to denote shared contribution to the research.
	\author{Weiqi Wang}
	%\email{larst@affiliation.org}
	\affiliation{%
		\institution{University of Technology Sydney}
		\city{Sydney}
		\state{NSW}
		\country{Australia}}
	
	\author{Zhiyi Tian}
	%\email{larst@affiliation.org}
	\affiliation{%
		\institution{University of Technology Sydney}
		\city{Sydney}
		\state{NSW}
		\country{Australia}}

	\author{Chenhan Zhang}
	%\email{larst@affiliation.org}
	\affiliation{%
		\institution{University of Technology Sydney}
		\city{Sydney}
		\state{NSW}
		\country{Australia}}
	
		\author{Luoyu Chen}
	%\email{larst@affiliation.org}
	\affiliation{%
		\institution{University of Technology Sydney}
		\city{Sydney}
		\state{NSW}
		\country{Australia}}
	
	\author{Shui Yu}
	%\email{larst@affiliation.org}
	\affiliation{%
		\institution{University of Technology Sydney}
		\city{Sydney}
		\state{NSW}
		\country{Australia}}

	\begin{abstract} 
Verifying whether the machine unlearning process has been properly executed is critical but remains underexplored. Some existing approaches propose unlearning verification methods based on backdooring techniques. However, these methods typically require participation in the model's initial training phase to backdoor the model for later verification, which is inefficient and impractical. In this paper, we propose an efficient verification of erasure method (EVE) for verifying machine unlearning without requiring involvement in the model's initial training process. The core idea is to perturb the unlearning data to ensure the model prediction of the specified samples will change before and after unlearning with perturbed data. The unlearning users can leverage the observation of the changes as a verification signal. Specifically, the perturbations are designed with two key objectives: ensuring the unlearning effect and altering the unlearned model's prediction of target samples. We formalize the perturbation generation as an adversarial optimization problem, solving it by aligning the unlearning gradient with the gradient of boundary change for target samples. We conducted extensive experiments, and the results show that EVE can verify machine unlearning without involving the model's initial training process, unlike backdoor-based methods. Moreover, EVE significantly outperforms state-of-the-art unlearning verification methods, offering significant speedup in efficiency while enhancing verification accuracy. The source code of EVE is released at  \uline{\url{https://anonymous.4open.science/r/EVE-C143}}, providing a novel tool for verification of machine unlearning.

\end{abstract}    
	
	%%
	%% The code below is generated by the tool at http://dl.acm.org/ccs.cfm.
	%% Please copy and paste the code instead of the example below.
	%%
	\begin{CCSXML}
		<ccs2012>
		<concept>
		<concept_id>10010520.10010553.10010562</concept_id>
		<concept_desc>Security and privacy;</concept_desc>
		<concept_significance>500</concept_significance>
		</concept>
		<concept>
		<concept_id>10010520.10010575.10010755</concept_id>
		<concept_desc>Computing methodologies~Machine learning</concept_desc>
		<concept_significance>300</concept_significance>
		</concept>
		</ccs2012>
	\end{CCSXML}

	\ccsdesc[500]{Security and privacy}
	\ccsdesc[500]{Computing methodologies~Machine learning}

	\keywords{Machine Unlearning, Unlearning Verification, Perturbation}

	\maketitle
	\thispagestyle{plain}
	\pagestyle{plain}

\section{Introduction}
\label{sec:intro}

With increasing concerns about personal data privacy, regulations like the GDPR~\cite{mantelero2013eu} have been introduced. These legal frameworks establish the ``right to be forgotten,'' enabling individuals to request the removal of their data from machine learning (ML) systems. This mandate has spurred research into \textit{machine unlearning}, which seeks to develop techniques for eliminating the influence of specific user data from trained ML models~\cite{bourtoule2021machine,cao2015towards,neel2021descent,warnecke2024machine}. While substantial progress has been made in devising unlearning algorithms \cite{cao2015towards,thudi2022necessity,sekhari2021remember}, much of the focus remains on algorithmic development, with relatively little attention given to the critical aspect of unlearning verification to ensure that data has indeed been effectively removed. 

%Especially for the unlearning users, they have unlearning requirements, but they are challenging to implement verification as they usually only have a black-box access to the models \citep{gao2024verifi,sommer2020towards}.

%\vspace{2mm}
%\noindent
%\textbf{Research Gap.} 

In existing studies, the backdoor-based methods are common solutions for the unlearning users to conduct verification of the data erasure of unlearning~\citep{hu2022membership,guo2023verifying,sommer2022athena}. However, these backdoor-based methods are inefficient in practice, as they require embedding backdoors during the initial model training phase. The effectiveness of these methods depends on incorporating backdoors from the outset~\cite{hu2022membership,guo2023verifying}, but it is impractical for users to anticipate the need to unlearn specific data samples during the model's original training \cite{wang2023mm,lin2020composite}. Moreover, such methods introduce unnecessary inefficiencies, as they involve the training process rather than focusing exclusively on verifying the unlearning operation \citep{wang2025tape,thudi2022necessity}. Therefore, we explore the following question: \textit{``Is there a more efficient and practical solution that verifies unlearning independently of the original model training process to ensure both efficiency and effectiveness in real-world applications?''}

%A more practical and efficient solution would be to design a verification method that verifies unlearning independently of the original training process, ensuring both efficiency and effectiveness in real-world applications.

%\noindent
%\textbf{Research Question.} 
%Given the current state of research, we pose the research question: \textit{``When an unlearning request is uploaded and processed, can we provide a practical verification mechanism to confirm the effective removal of specified data?''} To ensure practicality, the verification method should focus exclusively on the unlearning process, independent of the original model training.% For effectiveness, it must rigorously confirm whether the unlearning operation is conducted and the specified data has been unlearned.

\noindent
\textbf{Motivation.} 
Machine unlearning aims to remove the learned information of user-requested samples from the trained model and result in an unlearned model~\cite{chen2021machine,kurmanji2024towards,hu2024eraser,tarun2023fast}. We utilize the property that users upload the unlearning data as requests to design methods to perturb the unlearning samples to further remove some target information, ensuring the model decision boundary will shift for the target samples after unlearning. We utilize prediction changes for targeted samples as a signal for unlearning verification. In this way, the verification of unlearning focuses solely on the unlearning process, significantly enhancing its efficiency and practicality. We illustrate the comparison between our method and existing approaches, along with the underlying intuition, in~\Cref{fig_unlearningauditfigure1}.

\begin{figure}[t] 
	\centering 
	\includegraphics[width=0.99\linewidth]{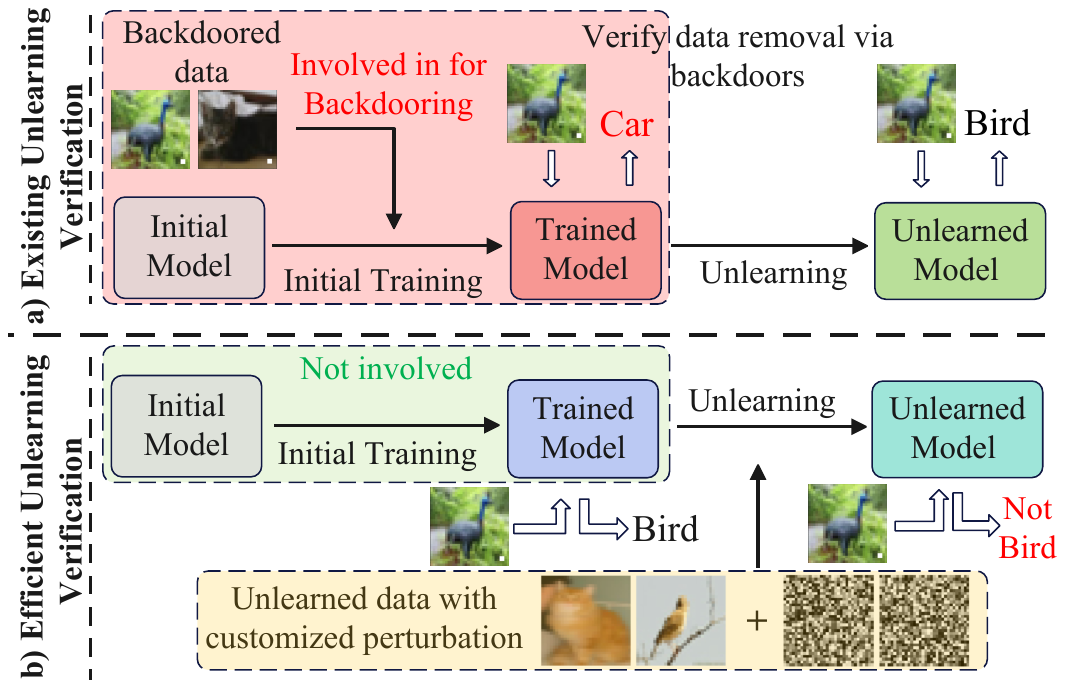}
	\vspace{-2mm}
	\caption{(a) The backdoor-based verification and (b) the intuition of efficient verification of erasure (EVE) based on the perturbed unlearning data. The scheme only involves the unlearning process rather than the original model training process.}
	\label{fig_unlearningauditfigure1}
	%	\vspace{-2mm}
\end{figure}

\noindent
\textbf{Our Work.} 
In this paper, we propose an efficient verification of data erasure method (EVE), which achieves verification without involving the initial model training process, ensuring efficiency and practicality. The general idea of EVE is shown at the bottom in \Cref{fig_unlearningauditfigure1}, where we aim to find a method that shifts the model decision boundary for specified samples after executing machine unlearning. Hence, the unlearning users can utilize the prediction changes of the specified samples as the verification signal for unlearning. In EVE, we plan to apply customized perturbations to the unlearning data to induce a shift in the unlearned model's decision boundary for specified samples. One challenge is that the customized perturbation may mitigate the unlearning effect. To solve this, we define two objectives of the perturbation: ensuring the effectiveness of the unlearning process and altering the model's prediction of the verification signal. Then, we formalize the perturbation generation as an adversarial optimization problem, solving it by aligning gradients of the unlearning operation with gradients of the boundary shift for verification-specified samples. Moreover, to provide theoretical guarantees for the verification results, we introduce an innovative approach by applying statistical hypothesis testing~\cite{montgomery2010applied} to determine whether the model has been successfully unlearned.

To evaluate the proposed method, we conduct extensive experiments across three mainstream approximate unlearning benchmarks and four representative datasets. The results demonstrate the clear superiority of EVE over state-of-the-art unlearning verification methods~\cite{hu2022membership,guo2023verifying,wang2025tape} in both efficiency and effectiveness. Notably, EVE achieves a significant speedup across all datasets compared to backdoor-based methods~\cite{hu2022membership,guo2023verifying}, as EVE is independent of the initial training process, much more efficient than backdoor-based methods that embed backdoors during the initial model training. Meanwhile, EVE achieves a comparable verification effect to the backdoor-based verification methods.

%With significant efficiency gains, EVE delivers verification performance comparable to or better than backdoor-based approaches. 

We summarize our contribution as follows:
\begin{itemize}[itemsep=0pt, parsep=0pt, leftmargin=*]
	\item We investigate verification for data erasure exclusively involving the unlearning process, setting it apart from existing backdoor-based approaches, which depend on embedding backdoors during the model's initial training phase. This distinction makes our method more efficient and practical. 
	\item We propose the EVE method, which perturbs the unlearning data to embed a signal into unlearned models for verification. The perturbation generation is formalized as an adversarial optimization problem, and we propose a perturbation descent method based on gradient matching to solve it. This approach ensures the accurate detection of unlearning and the effectiveness of unlearning.
	\item We innovatively employ hypothesis testing to offer statistical guarantees for the verification results and present the corresponding theoretical analysis. 
	 % requiring only black-box access to the target unlearned model.
	\item We conduct extensive experiments, and the results confirm the substantial efficiency improvements over state-of-the-art unlearning verification methods and the effectiveness of our approach in diverse settings. 
	% , shedding light on the design of future unlearning verification methods. 
\end{itemize}

%\noindent \textbf{Roadmap.} We introduce the related work in Section \ref{related_work}. We introduce the threat model and formalize the machine unlearning verification problem in Section \ref{problem_df}. Section \ref{muv_method} presents the details of our proposed COP for unlearning verification. In \Cref{exp_setting,exp_eval,exp_abla}, we introduce the experimental setting and illustrate the experimental results to evaluate the proposed COP. We summarize the paper in Section \ref{s_a_fw}. 

	%\cite{xu2024really}

\begin{table}[t]
	% \tiny
	\scriptsize
	\centering
%	\vspace{-2mm}
	\caption{ \small Overview of unlearning verification methods. %\vspace{-2mm}
	}
	\vspace{-2mm}
	\label{overview_of_auditing_method}
	\resizebox{\linewidth}{!}{
		\setlength\tabcolsep{2.pt}
		\begin{tabular}{c|cccc}
			\toprule[1pt]
			\multirow{2}{*} { \makecell[c]{\textbf{Unlearning} \\ \textbf{Verification} \\ \textbf{Methods}} } & \multicolumn{2}{c} {\textbf{Involving Processes}  } & \multicolumn{2}{c} { \textbf{Target Model}}   \\
			\cmidrule(r){2-3}   \cmidrule(r){4-5}  
			& \makecell[c]{{Original training} \\ {and unlearning	}  }  & \makecell[c]{{Only unlearning } \\ {process }  } & \makecell[c]{{Original trained} \\ {models	}  }    & \makecell[c]{{Unlearned} \\ {models}  }   \\ 
			\midrule
			MIB~\cite{hu2022membership} &\filledcircle & \emptycircle & \filledcircle&\emptycircle	  \\
			Athena~\cite{sommer2022athena} &\filledcircle & \emptycircle & \filledcircle  &\emptycircle    \\
			Veri. in the dark~\cite{guo2023verifying} &\filledcircle & \emptycircle & \filledcircle  &\emptycircle    \\
			Verifi~\cite{gao2024verifi} &\filledcircle &\emptycircle & \filledcircle  &\emptycircle     \\
			IndirectVerify \citep{xu2024really} &\filledcircle &\emptycircle &  \emptycircle &  \filledcircle   \\
			TAPE \citep{wang2025tape} &\emptycircle &\emptycircle &  \filledcircle &  \filledcircle   \\
			EVE (Ours)	     &\emptycircle & \filledcircle & \emptycircle  & \filledcircle     \\
			\bottomrule[1pt]
	\end{tabular}}
	%\vspace{-2mm}
	\begin{tabbing}
		\filledcircle: denotes the verification method is applicable; \\ \emptycircle: denotes the verification method is not applicable.
	\end{tabbing}
	\vspace{-4mm}
\end{table}

\section{Related Work} \label{related_work}

Though there are many studies focusing on investigating unlearning algorithms \cite{chen2021machine,nguyen2020variational,wang2023machine,kurmanji2024towards}, only a few studies addressed the essential challenge of providing verification for unlearning users' data~\cite{thudi2022necessity,wang2025tape,xu2024really}. Backdoor-based methods were the main solutions for unlearning users to verify data removal in machine unlearning~\cite{hu2022membership,sommer2022athena,guo2023verifying,gao2024verifi}. These approaches involve embedding backdoored samples into user data during model training, allowing verification of unlearning by checking whether the backdoor is removed from the models~\cite{hu2022membership,guo2023verifying}. Recently \citep{xu2024really} constructs the influential sample pairs: trigger samples and reaction samples for model training, so that users can use reaction samples to verify if the unlearning operation has been successfully carried out. However, these methods are limited by their dependence on the initial model training process, which is impractical in real-world applications where users cannot predict which data may need to be unlearned in the future. Besides, recent study \citep{wang2025tape} proposed unlearning verification based on posterior difference before and after unlearning, which cuts off the reliance on the original model training but still need the assistance of the trained model to calculate the posterior difference. 

Our EVE approach is significantly different from existing unlearning verification methods~\cite{hu2022membership,sommer2022athena,guo2023verifying,gao2024verifi} in terms of the involved processes and models, which is summarized and depicted in \Cref{overview_of_auditing_method}.

\section{Threat Model and Problem Statement} \label{problem_df}

%In this section, we introduce the ML scenario and the threat model, then the unlearning verification problem statement and the requirements for unlearning verification.

% while ensuring effective data removal and compliance.

% To comply with the ``right to be forgotten'' legislation and foster a privacy-preserving environment, the ML server is responsible for conducting machine unlearning operations. However, verifying unlearning remains challenging, as it is difficult to ensure that the server has genuinely processed the unlearning and not merely claimed compliance. 
%\noindent

\begin{figure*}[h]
	\centering
	\includegraphics[width=0.98\linewidth]{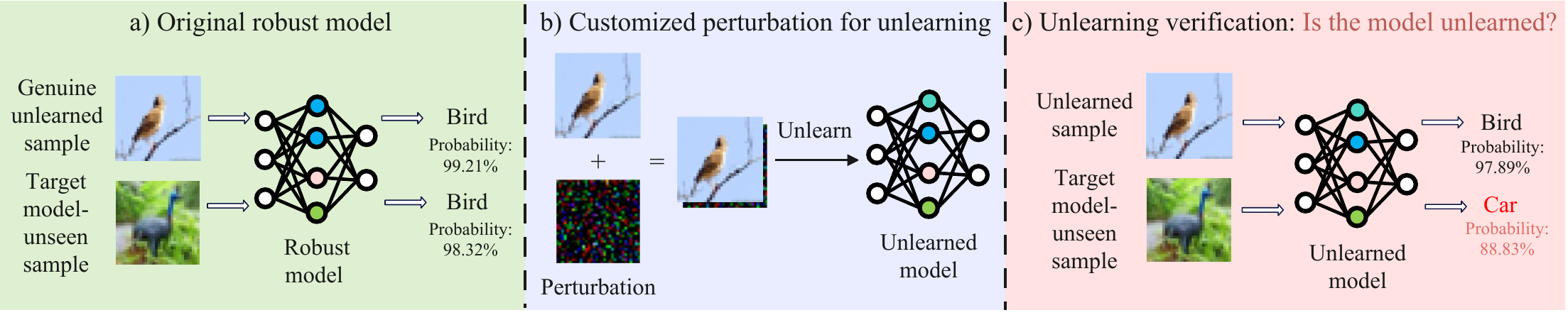}
	\vspace{-2mm}
	\caption{An example of the idea of our method. a) A trained robust model can correctly predict the genuine unlearned and target model-unseen samples with high probability. b) Our method customizes perturbation for the erased data for unlearning, aiming to make the unlearned model easily misclassify the unseen data after unlearning. c) The unlearning user queries for specified samples to verify if the unlearned model is influenced by the perturbation.
	}
	\label{fig:figure2}
%	\vspace{-2mm}
\end{figure*}

\subsection{MLaaS Scenario and Threat Model}

\noindent
\textbf{MLaaS Scenario.} We present our unlearning verification problem within the context of a Machine Learning as a Service (MLaaS) scenario. For clarity and ease of understanding, only two entities are involved: the ML server, which collects user data, trains models, and provides machine learning services, and the users, who upload data to servers for model training. Users may invoke their right to be forgotten, requiring the ML server to unlearn their data from the trained models.

\noindent
\textbf{Threat Model.} For the ML server, following typical unlearning verification settings~\citep{hu2022membership,guo2023verifying,wang2025tape}, we suppose the server is honest during model initial training but may use output suppression~\citep{yao2024large,lu2022quark,jang_etal_2023_knowledge} instead of a truly machine-unlearning operation to preserve model performance~\citep {cooper2024machine}. This scenario is more plausible than forging unlearning updates~\cite{thudi2022necessity}, as simulating data removal and managing the resulting utility loss would demand significant effort with little benefit to the server.

For the unlearning users, they have a local dataset containing the samples to be erased but lacks access to the full training data~\cite{warnecke2024machine,hu2024eraser}. The users can only query the model using their data. Similar to other common settings~\cite{hu2023duty,huang2025unlearn,di2022hidden}, we assume the unlearning algorithms are jointly determined by the server and users. Hence, the unlearning users can craft perturbations to unlearning samples for verification while not influencing unlearning effectiveness.

%Additionally, we assume the unlearning algorithms are jointly determined by the server and users, which is commonly set in other studies~\cite{hu2023duty}. Using these, the unlearning users can craft perturbations to the unlearning samples, potentially inserting a verification signal into the unlearned model for verification.

% To implement our method, we assume the unlearning users are capable of reconstructing the same ML model as the target ML service model in terms of architecture. This can be achieved through model hyperparameter stealing methods~\cite{wang2018stealing,Seong2018towards,salem2020updates}, which allow the user to replicate the model.

\subsection{Data Erasure Verification Problem in Machine Unlearning} \label{pro_state}

%To clarify the unlearning verification problem, we first outline the key stages of the machine unlearning process.

We first revisit the key stages of the machine unlearning process.

\noindent
\textbf{Machine Unlearning.} 
The process of unlearning typically involves three steps: (1) The ML server initially trains a model with parameters $\theta^*$ derived from the dataset $D$. (2) The unlearning user submits a request to the server, providing the dataset $D_u$ that they wish to have unlearned. (3) The ML server then applies an unlearning method, $\mathcal{U}$, to remove the influence of $D_u$ from the model, resulting in an unlearned model with parameters $\theta_{u, D \backslash D_u}$, where $D \backslash D_u$ represents the remaining dataset after the removal of $D_u$.

%, ensuring that the specified data has indeed been removed
 
The goal of machine unlearning verification is to assess whether the unlearning operation has been executed correctly and honestly. We formalize this problem as follows. 
\begin{prob_state}[Machine Unlearning Verification] 
	\label{verification_problem}	
	Given the potential for the ML server to spoof the execution of unlearning, and the capabilities of the unlearning user, unlearning verification requires the user to devise a method to infer whether the unlearning operation has been honestly performed. Specifically, the user must verify that the specified data $D_u$ has been properly unlearned, resulting in a transition from the trained model parameters $\theta^*$ to the unlearned parameters $\theta_u$.
\end{prob_state} 
It is essential to clarify that, in the statement, our verification is solely on the transition from the trained model $\theta^*$ to the unlearned model $\theta_u$. We try to solve this problem by perturbating the unlearning data to introduce a verification signal in the unlearned model for later verification. An effective customized perturbation for unlearning verification should meet the following requirements. 
\begin{itemize}[itemsep=0pt, parsep=0pt, leftmargin=*]
	\item \textit{Unlearning verification effect.} To ensure the verification effect for unlearning, we must ensure the signal of boundary shift is effective on the unlearned model when the server conducts unlearning with the perturbed samples. 
	%Meanwhile, the perturbed samples should not achieve the boundary shift effect on other methods, such as continual learning.
	%	\item \textit{Unlearning effect of the perturbed samples.} 
	%\item \textit{Stealthiness of the perturbed samples.} The perturbed unlearning samples for unlearning verification is not easy to be distinguished from normal unlearning samples.
	%\item \textit{Unlearning effect and functionality preservation.} The perturbed unlearning samples should still be effective for unlearning and will not come at the cost of significant model utility degradation.
	\item \textit{Unlearning effect and functionality preservation.} The perturbed unlearning samples should still be effective for unlearning and will not come at the cost of significant model utility degradation.
	\item \textit{Stealthiness of the perturbed unlearning sample.} The perturbed unlearning sample for verification is not easy to be distinguished from normal unlearning samples.
\end{itemize}

%Importantly, the target sample is chosen from outside the training dataset, which helps mitigate potential model utility degradation that may occur during the unlearning and signal embedding process.

%, verifying the proper execution of the unlearning process.

% we do not focus on the transition from the initial model $\theta$ to the trained model $\theta^*$, a phase required by existing backdoor-based verification methods. Instead,

\section{EVE: Efficient Verification of Data Erasure} \label{muv_method}

\subsection{Overview of the EVE Method}

This approach leverages the fact that users must submit unlearning requests to the server, allowing users to prepare and upload the data to be unlearned. Based on this, we propose unlearning verification through customized perturbation on unlearning data, which aims to cause the unlearned model to misclassify the specified target samples that were previously correctly classified by the trained robust model. We treat the prediction changes of the target samples before and after unlearning as the verification signal. We present an example of the idea and the solution process in \Cref{fig:figure2}. The general methodology includes three main steps: a) choosing model-unseen target samples which has high prediction probability; b) customizing perturbation for unlearning data and executing unlearning, which ensures both unlearning effectiveness and prediction shift of the target model-unseen samples; c) the unlearning users query for target samples to verify the unlearning status.

%to introduce a verification signal specifically targeting the unlearning process. The perturbation aims

\subsection{Customized Perturbation for Unlearning Data}

Recall the requirements of the unlearning verification defined in \Cref{pro_state}, we should guarantee the unlearning verification effect (prediction shift of target samples), the unlearning effect, and the stealthiness of the perturbation (constraint perturbation distance). Considering the unlearning algorithm, we can formalize the unlearning verification as an optimization problem, as outlined below

%\vspace{-4mm}
\begin{equation} \label{perturbation_aim}
%	\small
	\begin{aligned}
 	\min_{\delta }  \sum_{t=1}^T  \mathcal{L}_{u}( (x_t, y \neq y_t); \theta_{u}),  \\
    \text{ s.t. } \theta_{u} \in \arg \min_{\theta}  \sum_{i \leq M} \mathcal{L}_{u} ((x_i + \delta_i, y_i);\theta ),
	\end{aligned}
\end{equation}
where $M$ is the size of $D_u$, $ \mathcal{L}_{u}$ is the loss function of the agreed-upon unlearning algorithm, and $T$ is the size of the target samples (used for verification). \Cref{perturbation_aim} aims to identify a perturbation for the erased samples, $(x_i + \delta_i, y_i)$, such that, after unlearning these perturbed samples, the unlearned model alters its prediction of target samples $(x_t, y_t)$ that were correctly predicted by the original trained model. 
We define the constraint $\delta: \| \delta \|_{\infty} \leq d$ to ensure that the perturbed data remains similar to the original unlearning data to ensure the stealthiness.

%We define the constraint that $\delta: \| \delta \|_{\infty} \leq \alpha$ to ensure that the perturbed data will not be too different from the original unlearning data.

\noindent
\textbf{Unlearning Data Perturbation by Gradient Matching.}
Our objective is to find a perturbation $\delta$ such that, when the model is unlearned with the customizedly perturbed samples $D_{u}^p$, it minimizes the loss of errorly predicting the target samples, meanwhile minimizing the unlearning loss. Directly solving \Cref{perturbation_aim} is computationally intractable due to the bilevel nature of the optimization objective. Instead, one may implicitly make the update of unlearning approximate to the update of misclassifying target samples by finding a suitable $\delta$ such that for any model parameter $\theta$, the following condition is satisfied: 
\begin{equation} \label{noise_approx}
	\small
%	\scriptsize, _{\hat{y}_t \neq  y_t}
%\tiny
	\begin{aligned}
		\sum_{t=1}^{T} \nabla_{\theta} \mathcal{L}_{u} ( (x_t, y \neq y_t);\theta_{u})  \approx  \sum_{ i =1}^{M} \nabla_{\theta} \mathcal{L}_{u} ((x_i + \delta_i,y_i);\theta).
	\end{aligned}
\end{equation} 
If we can enforce \Cref{noise_approx} to hold for any $\theta$ during unlearning, the gradient steps that minimize the loss on the erased samples will also minimize the mislabeling loss for the target samples. Consequently, unlearning the model based on data with perturbation $\delta$ will ensure that the model mispredicts the target signal data $(x_t, y_t)$. However, calculating $\delta$ that satisfies \Cref{noise_approx} for all values of $\theta$ is intractable. A potential solution, as suggested in \cite{geiping2020witches,di2022hidden}, is to relax \Cref{noise_approx} so that it holds for a fixed, pre-trained model. We align the gradients of the unlearning and verification signal embedding targets by minimizing their negative cosine similarity. Specifically, we fix the trained parameters $\theta$ and use the perturbation $\delta$ to optimize the similarity between two gradients as 
\begin{equation} \label{cosine_sim}
	\small
	%\scriptsize
	\begin{aligned}
	 \phi (\delta, \theta) = 1 - \frac{\langle  \sum \nabla_{\theta} \mathcal{L}_{u} ( (x_t, y \neq y_t);\theta_{u}),  \sum \nabla_{\theta} \mathcal{L}_{u} ((x_i + \delta_i,y_i);\theta) \rangle}{ \|  \sum  \nabla_{\theta} \mathcal{L}_{u} ( (x_t, y \neq y_t);\theta_{u}) \| \cdot  \| \sum \nabla_{\theta} \mathcal{L}_{u} ((x_i + \delta_i,y_i);\theta) \|} .
	\end{aligned}
\end{equation}
\citeauthor{geiping2020witches}~\cite{geiping2020witches} solve it by initially randomizing $\delta$ and performing $K$ steps of Adam optimization to minimize $\phi(\delta, \theta_{D_{u}^{p}})$. They employ $R$ restarts, typically $R \leq 10$, to improve the robustness of the perturbation. Using this approach, we can also identify suitable perturbations for verification signal embedding via unlearning. However, this method is not always effective, as it does not consistently yield satisfactory perturbations within 10 restarts. To overcome this limitation, we propose an unlearning data perturbation descent strategy.

\noindent
\textbf{Perturbation Descent.} 
We initialize a perturbation matrix, $\delta$, and treat it as parameters to be optimized, while keeping the model parameters, $\theta$, fixed. We add the perturbation to the unlearning data $D_u$, using the perturbed data as input to the model. Next, we calculate the minimization loss using \Cref{cosine_sim}, which measures the cosine similarity between the unlearning and the verification signal embedding gradients. Using this loss, we apply gradient descent to update the perturbation matrix $\delta$, while keeping the model parameters $\theta$ fixed. After several iterations of training, this approach generates sufficient unlearning data perturbation, enabling the unlearned model to achieve the desired verification signal embedding effect. The detailed algorithm for the unlearning data perturbation descent is presented in \Cref{Noise_backpropagation_update}.

\begin{algorithm}[t]
	%\small
	\caption{Unlearning Data Perturbation Descent (UDPD)} \label{Noise_backpropagation_update}
	\begin{small} % small, normalsize
		\BlankLine
		\KwIn{Trained model $\theta$, verification signal target $(x_t, y_t)$, unlearning dataset $D_u$ }
		\KwOut{The synthesized data with the unlearning perturbation, $D_{u}^{p} = (X_a + \delta, Y_a)$} %, and three metrics for verification
		\SetNlSty{}{}{} % This line removes the vertical line before the for-loop
		\SetKwFunction{UDPD}{\textbf{UDPD}}
		\SetKwProg{Fn}{procedure}{:}{end procedure}
		\SetNlSty{}{}{} % This line removes the vertical line before the for-loop
		\Fn{\UDPD{$\theta$, $(x_t,y_t)$, $D_u$}}{
			$\delta_{1} \gets \mathcal{N}(0,1)$  \hspace{4mm}    $\rhd$ Initialize unlearning perturbation. \\
			$\nabla \theta^t \gets  \nabla_{\theta} \mathcal{L}_{unl.}((x_{t}, y \neq y_t); \theta)$  \hspace{2mm} $\rhd$ Compute gradients for target. \\
			\For{$i \gets 1$ \KwTo $n$}{
				$X_{u,i}^p \gets X_u + \delta_i$  \hspace{2mm} $\rhd$ Add the perturbation to data. \\
				$\nabla \theta_i \gets  \nabla_{\theta} \mathcal{L}_{unl.}((X_{u,i}^p, Y_u); \theta_i)$  \hspace{2mm} $\rhd$ Compute gradients for unlearning. \\
				$\phi_i  \gets \text{Sim}( \nabla \theta^t, \nabla \theta_i)$  \hspace{2mm} $\rhd$ Compute similarity using \Cref{cosine_sim}. \\
				$\delta_{i+1} \gets \delta_{i} - \eta \nabla_{\delta} ( \phi_i)$   \hspace{2mm} $\rhd$ Update perturbation to match gradients. \\
			}
			\Return $D_{u}^{p} = (X_u + \delta_{n+1}, Y_u)$
		}
	\end{small}
\end{algorithm}

% the illustrated process is presented in \Cref{fig_noiseadjustingthroughbackpropagation}.

%Here, we have a question, should we keep the model fixed, or also simulating the model training process.

%When we solving the optimization problem, we fix the original model parameters that was trained based on the clean samples and add

The Unlearning Data Perturbation Descent (UDPD) algorithm iteratively perturbs unlearning data by adding perturbations to the original unlearning dataset to align the unlearned model’s behavior with the verification objective. It begins by initializing a random perturbation and calculating the gradient of the unlearning loss for the verification signal target. 
Note that, even in black-box setting where the user cannot access the model, he/she can calculate the gradient with model stealing methods \citep{tramer2016stealing,orekondy2019knockoff,ilyas2018black}.
In each iteration, the algorithm updates the perturbation, computes gradients for the unlearning objective, and measures the similarity between the gradients of the verification objective and the unlearning objective. The perturbation is then refined based on this similarity until the process converges.

%\subsection{Executing Unlearning}

\noindent
\textbf{Executing Unlearning.} 
In machine unlearning, users typically submit an unlearning request that includes specified samples to be removed. In our approach, users upload the perturbed unlearning data to the server for unlearning. The server, in agreement with the users, applies the mutually selected unlearning algorithms to unlearn the provided data. We denote the associated loss function as $\mathcal{L}_{u}$. Once the unlearning process is complete, the server releases a black-box API of the updated model to users to provide ML services. The unlearning users can query the model with their verification samples and infer the conduction of unlearning from the changed prediction of the target samples.

\subsection{Unlearning Verification}

Since the unlearning users have perturbed their data, if the server has honestly executed the specified unlearning algorithms, the unlearned model should reflect the embedded verification signal. If this signal is absent, it indicates that the unlearning process has not been properly carried out. Therefore, users can verify the unlearning of their data by demonstrating the model's expected behavior, confirming that the unlearning operation was correctly executed.

To provide a statistical guarantee for the unlearning verification, we employ statistical testing~\citep{montgomery2010applied} to estimate the confidence level in determining whether the unlearned model has been successfully embedded with the verification signal. Specifically, we apply hypothesis testing to assess whether the unlearning process was correctly executed. We define the unlearning confirmation hypothesis $\mathcal{H}_1$ and the null hypothesis $\mathcal{H}_{0}$ as follows: 
\begin{equation}
	\begin{aligned}
		\mathcal{H}_1 \colon  \Pr ( f (x_t) \neq y_t) \geq  \beta, \\
		\mathcal{H}_0 \colon  \Pr (f (x_t) \neq y_t) \leq \beta,
	\end{aligned}
\end{equation}
where $\Pr (f (x_t) \neq y_t)$ denotes the probability that the model has not correctly predicted the target sample $f(x_t)$ as $y_t$, and $\beta$ represents the trained model's misprediction probability before unlearning. In this paper, we set $\beta = \frac{K-1}{K}$, which reflects random chance, where $K$ is the number of classes in the classification task. Our experiments demonstrate that the misprediction probability of the trained model before unlearning is significantly lower than $\frac{K-1}{K}$ as the trained model is robust to unseen samples.

The unlearning confirmation hypothesis $\mathcal{H}_1$ states that the verification signal for unlearning with the misprediction probability is significantly higher than random chance, indicating a clear difference between the behavior of the unlearned model and the original trained model. In contrast, the null hypothesis $\mathcal{H}_{0}$ posits that the misprediction probability is less than or equal to random chance, meaning the unlearned model behaves similarly to the trained model before unlearning. If users can confirm hypothesis $\mathcal{H}_{1}$ with statistical confidence, they can verify that their data is unlearned. Otherwise, they can conclude that their unlearning request was not properly executed.

Because the unlearning users are only given black-box access to the target model, they can query the model and achieve the final results of the verification signal probability $\alpha = \Pr (f(x_t) \neq y_t)$. Based on the verification probability, we consider that the data owner can use a T-test \cite{montgomery2010applied} to test the hypothesis for adversarial samples. Below, we formally stated under what conditions the unlearning user can reject the null hypothesis $\mathcal{H}_{0}$ at the significance level $1 - \tau$ (i.e., with $\tau$ confidence) using a limited number of queries.

\begin{theorem} \label{theorem_1}
	Given a target model $f(\cdot)$ in a classification task with $K$ classes, and $m$ queries to $f(\cdot)$, if the model's misprediction probability $\alpha$ for the target sample $f(x_t) \neq y_{t}$ satisfies the following formula:
	\begin{equation}
		\sqrt{m-1}  \cdot (\alpha - \beta) - \sqrt{\alpha -\alpha^2} \cdot t_{\tau}> 0,
	\end{equation}
	the unlearning user can reject the null hypothesis $\mathcal{H}_{0}$ at significance level $1-  \tau$, 
	where $\beta = \frac{K-1}{K}$ is the expected misprediction probability under random chance, and $t_{\tau}$ is the $\tau$ quantile of the $t$ distribution with $m-1$ degrees of freedom.
\end{theorem}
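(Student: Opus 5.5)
We model the $m$ queries as i.i.d.\ Bernoulli indicators $Z_j = \mathbb{1}[f(x_t^{(j)}) \neq y_t]$, $j=1,\dots,m$. Each $Z_j$ has mean $\alpha$ and variance $\alpha(1-\alpha)=\alpha-\alpha^2$. The test is a one-sided one-sample $t$-test of $\mathcal{H}_0\colon \mathbb{E}[Z_j]\le \beta$ against $\mathcal{H}_1\colon \mathbb{E}[Z_j]>\beta$. Write $\bar Z$ for the sample mean and $S$ for the sample standard deviation. The statistic is
\begin{equation*}
T = \frac{\sqrt{m}\,(\bar Z - \beta)}{S}.
\end{equation*}
Under the boundary null $\mathbb{E}[Z_j]=\beta$, we treat $T$ as (approximately) $t$-distributed with $m-1$ degrees of freedom. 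This is the standard modelling assumption that the statement implicitly adopts by invoking $t_\tau$. We reject $\mathcal{H}_0$ at significance level $1-\tau$ when $T>t_\tau$. Since the null is composite, for any true mean $\le\beta$ the statistic is stochastically smaller, so this threshold controls the error uniformly over $\mathcal{H}_0$.

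Next we substitute the observed quantities. The empirical misprediction frequency plays the role of $\alpha$, so $\bar Z=\alpha$. The sample variance of $m$ Bernoulli observations with mean $\alpha$ equals $\frac{1}{m-1}\sum_j (Z_j-\alpha)^2 = \frac{m}{m-1}(\alpha-\alpha^2)$. Therefore $S=\sqrt{\tfrac{m}{m-1}}\sqrt{\alpha-\alpha^2}$, and
\begin{equation*}
T = \frac{\sqrt{m}\,(\alpha-\beta)}{\sqrt{\tfrac{m}{m-1}}\sqrt{\alpha-\alpha^2}} = \frac{\sqrt{m-1}\,(\alpha-\beta)}{\sqrt{\alpha-\alpha^2}}.
\end{equation*}
The rejection condition $T>t_\tau$ then rearranges to $\sqrt{m-1}\,(\alpha-\beta) - \sqrt{\alpha-\alpha^2}\, t_\tau > 0$, which is exactly the displayed inequality. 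We need $0<\alpha<1$ so that the denominator is positive and the rearrangement preserves the inequality direction; the degenerate case $\alpha=1$ is handled separately, since then $\alpha>\beta$ trivially.

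The main obstacle is the distributional justification. Bernoulli data are not Gaussian, so $T$ is only asymptotically $t$-distributed, via the CLT and Slutsky's lemma. I would state this explicitly as the approximation underlying the test, in line with the paper's citation of \cite{montgomery2010applied}, rather than attempt an exact finite-sample claim. The second delicate point is the $\frac{m}{m-1}$ correction, which is what turns $\sqrt{m}$ into $\sqrt{m-1}$. I would write out the unbiased-variance computation carefully, since it is the only place where the exact constant in the theorem arises.
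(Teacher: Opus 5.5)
Your proposal is correct and follows the paper's proof essentially step for step. Both arguments model the queries as Bernoulli indicators, use a CLT-justified one-sample statistic $T=\sqrt{m}(\alpha-\beta)/s$, get $s^2=\frac{m}{m-1}(\alpha-\alpha^2)$ from $\mathcal{R}_i^2=\mathcal{R}_i$, and rearrange $T>t_\tau$. Your additional care on three points goes beyond what the paper states: you require $0<\alpha<1$ for the rearrangement, you treat $\alpha=1$ separately, and you argue the test controls error over the whole composite null. That last claim is valid here, since $T=\sqrt{m-1}\,(\bar Z-\beta)/\sqrt{\bar Z(1-\bar Z)}$ is increasing in $\bar Z$ and $\bar Z$ is stochastically increasing in the true mean.
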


%\Cref{proof_of_theorem1}

The proof of the \Cref{theorem_1} can be found in \Cref{proof_of_theorem1}. \Cref{theorem_1} implies that if the test statistic exceeds the critical value $t_{\tau}$, the unlearning user can reject the null hypothesis at the given significane level $1- \tau$, indicating that the model is likely to execute the unlearning operation and thus influenced by the perturbation for the specific sample $x_t$.

\section{Performance Evaluation} \label{exp_setting}

%Experimental Settings

%In this section, we will introduce the settings, the general evaluation results, and the ablation studies.

\subsection{Experimental Settings}

%\subsection{Datasets and Models}
\noindent
\textbf{Datasets.}
We conducted experiments on four representative public image datasets: MNIST \cite{deng2012mnist}, CIFAR10 \cite{krizhevsky2009learning}, STL-10 \cite{coates2011analysis}, and CelebA \cite{liu2018large}, covering a range of levels of learning complexity. We present the statistics of all datasets and the corresponding detailed introduction in \Cref{datasets_appendix}.

%We conducted experiments on four widely adopted public datasets: MNIST \cite{deng2012mnist}, CIFAR10 \cite{krizhevsky2009learning}, STL-10 \cite{coates2011analysis}, and CelebA \cite{liu2018large}. MNIST, CIFAR10, and STL-10 are benchmark datasets utilized for 10-class image classification tasks, offering a range of objective categories with varying levels of learning complexity. The statistics of all datasets and detialed introduction are listed and introduced in \Cref{datasets_appendix}. %used in our experiments

\noindent
\textbf{Models.}
Three representative model architectures are selected in our experiments: a 5-layer MLP with ReLU activations, a ResNet-18, and a 7-layer convolutional neural network (CNN). Particularly, we train a 5-layer MLP model on MNIST, a ResNet18 on CIFAR10 and STL-10, and a 7-layer CNN on CelebA. All methods are implemented using Pytorch and are conducted on NVIDIA Quadro RTX 6000 GPUs. 

%On the MNIST dataset, we set the learning rate $\eta = 0.001$. On CIFAR10, STL-10, and CelebA, we set the learning rate $\eta=0.0001$. During training, we set the minibatch size to 16 on MNIST and CIFAR10, the minibatch size to 2 on STL-10, and the minibatch size to 160 on CelebA. 

 \begin{table*}[t]
	% \tiny
	%\scriptsize
	\caption{Overall Evaluation Results. The best results are highlighted in \textbf{Bold}. % \vspace{-2mm} 
	}
	\label{tab_total}
	\resizebox{\linewidth}{!}{
		\setlength\tabcolsep{2.pt}
		\begin{tabular}{c|cccc|cccc|cccc|cccc}
			\toprule[1pt]
			\multirow{2}{*} {\makecell[c]{\textbf{Metrics} }} & \multicolumn{4}{c} {MNIST, $\text{\it ESR}=2\%$}& \multicolumn{4}{c} {CIFAR10, $\text{\it ESR}=2\%$} & \multicolumn{4}{c} {STL-10, $\text{\it ESR}=2\%$} & \multicolumn{4}{c} {CelebA, $\text{\it ESR}=0.6\%$} \\
			\cmidrule(r){2-5}   \cmidrule(r){6-9} \cmidrule(r){10-13} \cmidrule(r){14-17}
			& Original & MIB    & TAPE	 & EVE & Original & MIB   & TAPE & EVE	& Original	 &   MIB   & TAPE & EVE & Original	 &   MIB   & TAPE & EVE  \\
			\midrule 
			RT (s)   & 142  & 146   & 26.5  &  \textbf{12.81}    & 599   & 613 		 & 99.3	& \textbf{3.93} 	 & 506	& 560   & 54.3 & \textbf{3.44}   &  592	& 641   & 23.6 & \textbf{7.52}  \\
			UV (\%)  	  & -  & $100.0\%$  & $94.97\%$ & \textbf{100.0\%}     & - & $100.0\%$  & 96.20\% & \textbf{100.0\%}  	& - & $100.0\%$  & 84.40\% & \textbf{100.0\%} &  -  &  $92.89\%$   & 88.42\% &  \textbf{100.0\%} \\
			AO (\%)      & \textbf{98.91\%}   & 98.73\%    &  \textbf{98.91\%}  & \textbf{98.91\%}    & \textbf{79.98\%}    & 77.67\%  & \textbf{79.98\%}   & \textbf{79.98\%} & \textbf{67.27\%} &  66.61\%  &  \textbf{67.27\%} &  \textbf{67.27\%}  &  \textbf{96.24\%} &  95.98\%   & \textbf{96.24\%} &  \textbf{96.24\%} \\
			AU (\%)      & -   &97.51\%     &  \textbf{98.16\%} &   {97.69\%}    & -   & 77.36\%    & \textbf{79.45\%} & {79.35\%} & -  &  66.57\%  &  \textbf{67.19\%} &  {67.17\%}  & - &  95.79\%   & \textbf{96.02\%} &  {95.84\%} \\
			\bottomrule[1pt]
	\end{tabular}}
	%\vspace{-2mm}
	\scriptsize
	\begin{tabbing}
		RT: the running time to assess the efficiency; VU: the unlearning verifiability; AO: the accuracy of the original trained model; \\AU: the accuracy of the unlearned model.
	\end{tabbing}
	%\vspace{-6mm}
\end{table*}

%\subsection{Metric}

\noindent
\textbf{Metric.}
We use the verifiability metric to measure the verification effect, the model accuracy to evaluate the model utility preservation, and the running time to assess efficiency. To summarize, we have the following three metrics: 
\begin{itemize}[itemsep=0pt, parsep=0pt, leftmargin=*]
	\item \textbf{Unlearning Verifiability (UV).} Verifiability measures the data removal verification by calculating the misclassification rate of the unlearned model for the verification target samples, denoted as \[\textbf{UV} = \frac{1}{T} \sum_t^T  \mathbb{I}(f(x_t) \neq y_t),\] where $\mathbb{I}$ is the indicator function that equals 1 when its argument is true ($f(x_y) \neq y_t$) and 0 otherwise. 
	\item \textbf{Model Accuracy.} Model accuracy is used to evaluate utility preservation and shows whether the verification methods degrade the utility of the ML model, which includes the accuracy of the original trained model (\textbf{AO}) and the accuracy of the unlearned model (\textbf{AU}). 
	\item  \textbf{Running Time (RT).}  It assesses the efficiency, which records the running time of the entire process of each method.
\end{itemize}

%\subsection{Compared Unlearning Verification and Machine Unlearning Benchmarks}

\noindent
\textbf{Unlearning Verification Benchmarks.}
There are a number of unlearning verification methods~\cite{wang2025tape,hu2022membership,sommer2022athena,guo2023verifying}, mostly relying on backdooring techniques and involving the original model training processes. We choose one representative backdoor-based method, MIB~\citep{hu2022membership}, and one verification method without backdoor, TAPE~\citep{wang2025tape}, as the comparison baselines.

%Among them, MIB~\cite{hu2022membership} is the most representative and achieves the best verification performance. Therefore, we only choose the MIB method to compare with our method.

\noindent
\textbf{Unlearning Benchmarks.} 
We choose three mainstream approximate unlearning algorithms to evaluate the unlearning verification methods, specifically, HBU~\cite{guo2019certified}, VBU~\cite{nguyen2020variational}, and SalUn~\cite{fan2024salun}. We briefly summarize these unlearning benchmarks as follows. %in \Cref{benchmarks_intro}.
\begin{itemize}[itemsep=0pt, parsep=0pt, leftmargin=*]
	%\item \textbf{RFU \cite{wang2023machine}.} RFU is an approximate unlearning method. It tries to unlearn a bottleneck representation by minimizing the mutual information between the representation and the erased samples. We set a middle layer of the original model as the representation layer and implement unlearning according to \cite{wang2023machine}.
	\item  \textit{Hessian matrix-based Unlearning \textbf{(HBU)}} \cite{sekhari2021remember}: HBU is an approximate unlearning method, which needs to calculate the inverse Hessian matrix of the remaining dataset as the weight for an unlearning update. We implement HBU  follow the unlearning process as introduced in \cite{sekhari2021remember}. 
	\item \textit{Variational Bayesian Unlearning \textbf{(VBU)}} \cite{nguyen2020variational}: VBU is an approximate unlearning method based on variational Bayesian inference. For the convenience of experiments, we set a middle layer of original neural networks as the Bayesian layer and calculate the unlearning loss according to \cite{nguyen2020variational} based on the Bayesian layer and erased samples for unlearning. 
	\item \textit{Saliency Unlearning \textbf{(SalUn)}} \citep{fan2024salun}: SalUn introduces ``weight saliency'' to remove the influence of samples and classes for unlearning a model, improving effectiveness and efficiency.
\end{itemize}

%\Cref{benchmarks_intro} of Appendix.

%\noindent

\subsection{Overview Evaluation of EVE}
We first present an overall evaluation of different verification methods in \Cref{tab_total}, which is evaluated for the VBU \cite{nguyen2020variational} unlearning method. Bolded values highlight the best performance for each metric. A dash indicates that a specific evaluation metric is not applicable to the corresponding method.

\noindent
\textbf{Setup.}
We set the Erased Sample Rate (\textit{ESR}) to $2\%$ for MNIST, CIFAR-10, and STL-10, and to $\text{\it ESR}=0.6\%$ for CelebA due to its significantly larger dataset size compared to the other three. To better illustrate the balance between utility preservation and efficiency, we report the performance of the original model without any unlearning and verification methods, shown in the ``Original'' column in \Cref{tab_total}. We also record the accuracy of both the original trained and unlearned models to assess the influence of different methods on the model.

\noindent
\textbf{Evaluation of Efficiency.}
As EVE does not involve the initial model training process, it significantly improves efficiency compared to MIB and ``Original.'' The ``Original'' column represents the original model training without any verification methods. MIB involves backdooring the model during its initial training phase, adding considerable overhead. Compared with MIB, EVE achieves the highest $160\times$ speedup on STL-10. Though TAPE also does not rely on the initial training period, it needs to mimic the shadow unlearning models to prepare data and train a reconstructor for unlearning verification, which consumes more computation time than EVE.

\noindent
\textbf{Unlearning Verification Effect and Functionality Preservation.} 
In terms of verifiability, EVE consistently achieves 100\% unlearning verifiability across all datasets, matching or exceeding MIB and TAPE's performance, particularly on CelebA, where EVE outperforms MIB and TAPE (100\% vs. 92.89\% and 88.42\%). EVE and TAPE maintain the same accuracy as ``Original'' because it does not involve the original training process. Meanwhile, EVE maintains high accuracy for the unlearned models. For example, on MNIST, the original model accuracy is 98.91\%, and after unlearning, EVE preserves an accuracy of 97.69\%, which is much better than MIB. Since TAPE also does not influence the unlearned model, it has a slightly higher model accuracy than EVE, but the cost is that the verifiability of TAPE is not as stable as EVE.
These results illustrate that EVE not only provides reliable unlearning verification but also ensures minimal degradation of model utility, as indicated by the small accuracy drop.

\iffalse
\begin{tcolorbox}[colback=white, boxrule=0.3mm]
	\noindent \textbf{Takeaway 1.} CAP achieves the best efficiency and model functionality preservation as our scheme is independent of the initial model training. Moreover, CAP provides similar or even better unlearning verifications as backdoor-based methods. 
\end{tcolorbox}
\fi 

%(both how much information is unlearned and data removal status verification)

%\subsection{Evaluations on Various Unlearning Methods} \label{exp_on_unl_benchmarks}

%\vspace{2mm}
%\noindent
%\textbf{Setup.} 
%We set the value of {\it ESR} the same as in \Cref{tab_total}. When evaluating the EVE, to keep the setting similar to MIB, we add the perturbation with the same perturbation distance as the trigger patch of MIB.  
%We only perturb the unlearned data through \Cref{cosine_sim} but do not change the labels as MIB. We demonstrate the evaluation of unlearning verification methods on three mainstream approximate unlearning benchmarks in \Cref{fig_test06all}.

%To straightforwardly display the compared verification results of having or not having unlearned specified samples, we correspondingly set two situations, i.e., the erased samples $D_{u,b}$ and $D_{u,nois}$ not in or still in the remaining dataset. 

\begin{figure}[t]
	\centering
	\includegraphics[width=0.99\linewidth]{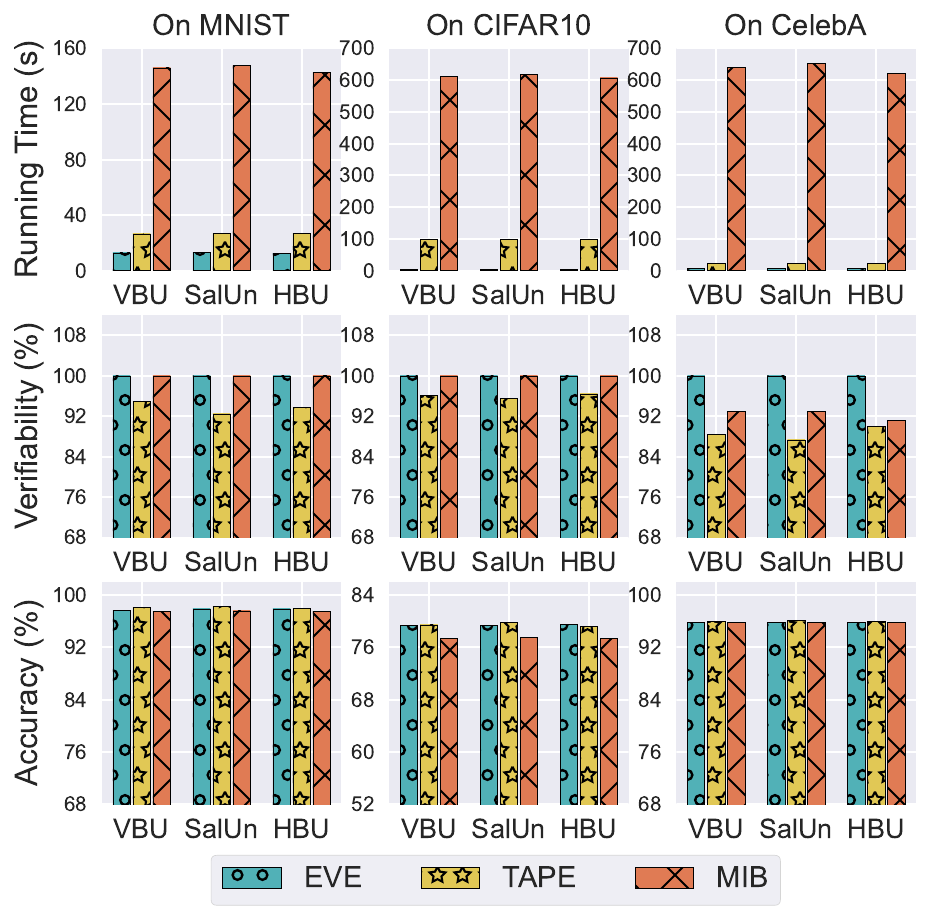}
	\vspace{-2mm}
	\caption{Data erasure verification for different unlearning methods. \vspace{-4mm}
		%EVE consistently achieves significant efficiency improvement, a better unlearning verification effect, and functionality preservation.
	} 
	\label{fig_test06all}
\end{figure}

\subsection{Evaluations of Verification on Various Unlearning Methods}
We also present the evaluation of verification methods for three mainstream approximate unlearning benchmarks in \Cref{fig_test06all}, where the experimental settings are the same as the above. The top row demonstrates the running time for verification of different unlearning methods across different datasets. 
EVE has the best efficiency and significantly outperforms MIB, consistently achieving faster execution times than MIB and TAPE across all datasets and unlearning methods (VBU, SalUn, HBU). On MNIST, EVE reduces the running time to approximately 12.8 seconds compared to MIB's 146 seconds. On CIFAR-10 and CelebA, EVE achieves 3.93 seconds and 7.52 seconds, respectively, showcasing substantial speedups over MIB. This confirms that EVE provides a more time-efficient solution for unlearning verification, largely due to its avoidance of the initial model training process.

%\noindent
%\textbf{Evaluations of Efficiency.} 

%\noindent
%\textbf{Verification Effect and Functionality Preservation.} 
The second row of \Cref{fig_test06all} presents the verifiability of unlearning for EVE, TAPE, and MIB. EVE maintains perfect verifiability (100\%) across all experiments, consistently exceeding TAPE and slightly better than MIB. The third row of \Cref{fig_test06all} shows the accuracy preservation, which only contains the unlearned model accuracy here. Since TAPE is not involved in both the learning and unlearning process, it achieves the best model accuracy. EVE consistently preserves model accuracy better than MIB across all datasets and methods. 

%On MNIST, EVE retains high accuracy, matching or slightly improving upon MIB. On CIFAR-10 and CelebA, EVE maintains minimal utility loss after unlearning. These results highlight EVE's ability to preserve the model functionality while verifying unlearning.

\iffalse
\begin{table}[h]
	% \tiny
	%\scriptsize
	\caption{  Overall Evaluation Results on MNIST, CIFAR10, STL-10, and CelebA. \vspace{-2mm} 
	}
	\label{with_additional_user}
	\resizebox{\linewidth}{!}{
		\setlength\tabcolsep{3.pt}
		\begin{tabular}{c|cccccc}
			\toprule[1pt]
			\multirow{2}{*} {\makecell[c]{\textbf{Metrics} }} & \multicolumn{3}{c} {MNIST}& \multicolumn{3}{c} {CIFAR10}  \\
			\cmidrule(r){2-4}   \cmidrule(r){5-7}
			& Original & MIB    	 & EVE & Original & MIB   & EVE\\
			\midrule 
			Running time (s)   & 142  & 146    &  \textbf{12.81}    & 599   & 613 			& \textbf{3.93}  \\
			Verifiability   	  & -  & $100.0\%$ & \textbf{100.0\%}     & - & $100.0\%$ & \textbf{100.0\%}  \\
			Accuracy     & \textbf{98.91\%}   & 98.73\%    & \textbf{98.91\%}    & \textbf{79.98\%}    & 77.67\%    & \textbf{79.98\%} \\
			\bottomrule[1pt]
	\end{tabular}}
	\vspace{-2mm}
%	\scriptsize
%	\begin{tabbing}
%		Accuracy (Original): the accuracy of the original trained model; Accuracy (Unlearned): the accuracy of the unlearned model.
%	\end{tabbing}
%	\vspace{-4mm}
\end{table}

\fi

\begin{figure}[t]
	\centering
	\includegraphics[scale=0.4]{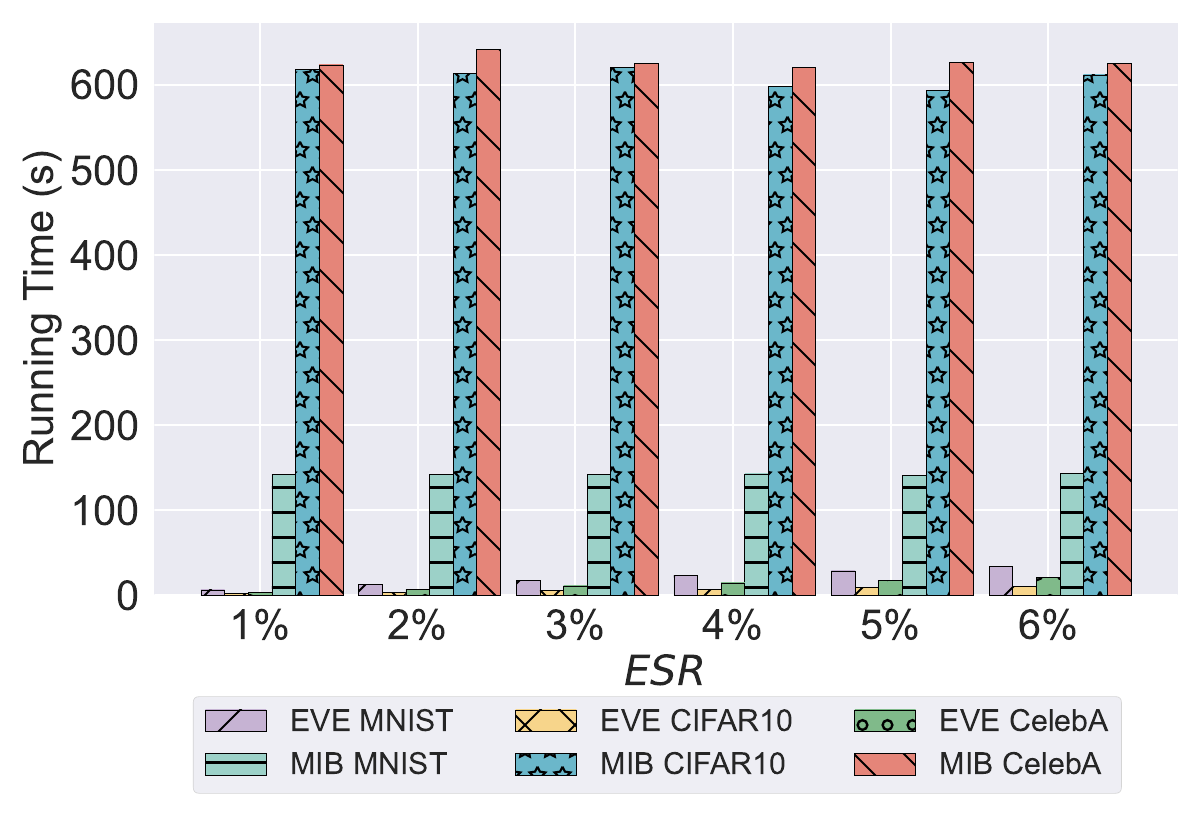}
	\vspace{-2mm}
	\caption{Efficiency impacted by different $\text{\it ESR}$.}
	\label{evaluation_of_running_time} 
	\vspace{-4mm}
\end{figure}

\subsection{Scalability of Verification Methods on Various $\text{\it ESR}$ } \label{impact_of_ess}

\noindent
\textbf{Setup.} 
We set the $\text{\it ESR}$ on MNIST and CIFAR10 from $1\%$ to $5\%$ to evaluate the impact of $\text{\it ESR}$. We keep other parameters fixed when evaluating $\text{\it ESR}$. The ablation study is still conducted based on VBU \cite{nguyen2020variational}. Additional scalability evaluation about unlearning additional users' data is presented in \Cref{add_experiment_of_scala}. 

%Moreover, since our EVE always has the lowest computation cost than TAPE and MIB, as shown in \Cref{evaluation_of_running_time}.

\noindent
\textbf{Evaluation of Efficiency.} 
The main components of the running time of EVE are perturbing the unlearning samples and unlearning executing, which are highly related to the size of the erased samples. \Cref{evaluation_of_running_time} illustrates the impact of varying the {\it ESR} on the running time for both EVE and MIB across MNIST, CIFAR-10, and CelebA. As the {\it ESR} increases, the running time for both methods gradually rises. However, EVE consistently demonstrates significantly lower running times compared to MIB, regardless of the {\it ESR} level. For example, at an {\it ESR} of 2\%, EVE requires less than 10 seconds for CIFAR10 and CelebA, while MIB takes over 500 seconds on both datasets. The figure also shows that while the running time for EVE increases slightly when {\it ESR} increases, it remains consistently faster than MIB across all datasets. This trend highlights EVE's efficiency in handling larger unlearning requests, as it avoids the extensive overhead required by MIB, i.e., needing to backdoor during the initial model training phase.

%\Cref{evaluation_of_in_or_not_in} illustrates the impact of $\text{\it ESR}$ when providing unlearning verification on MNIST, CIFAR10 and CelebA. The $\text{\it ESR}$ on MNIST and CIFAR10 is from $1\%$ to $5\%$ and on CelebA is from $0.5\%$ to $0.9\%$. The ablation study is still conducted based on the representative unlearning method, VBU \cite{nguyen2020variational}. 

\begin{figure}[t]
	\centering
	\subfloat{ 	  \rotatebox{90}{ \hspace{8mm}  { On MNIST} }
		\includegraphics[scale=0.28]{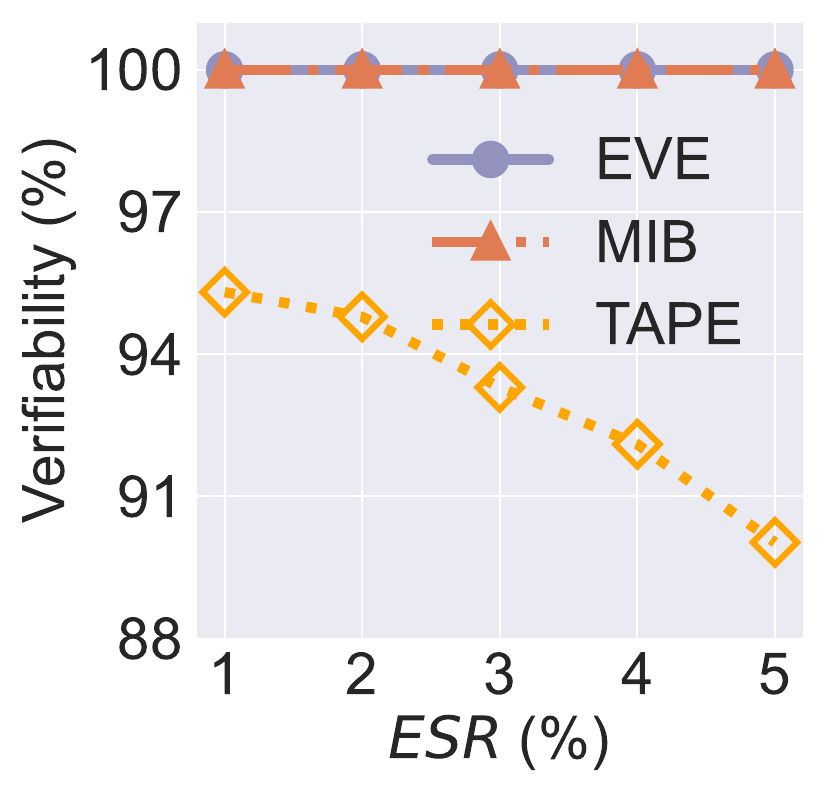}
		%\caption{An example of a subfigure.}
		\label{fig:mnistaccbetacurve}
	}
	\subfloat{ 	 
		\includegraphics[scale=0.28]{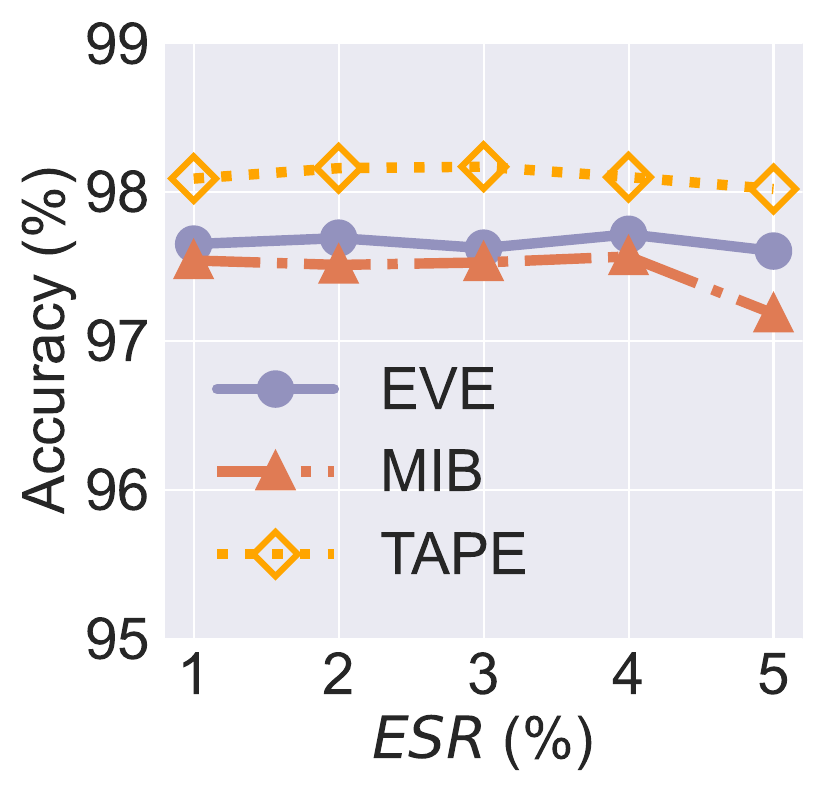}
		%\caption{An example of a subfigure.}
		\label{fig:mnistbackaccbetacurve} 
	}
	\\
	\vspace{-4mm}
	\subfloat{ 	 \rotatebox{90}{  \hspace{8mm} { On CIFAR10} }
		\includegraphics[scale=0.28]{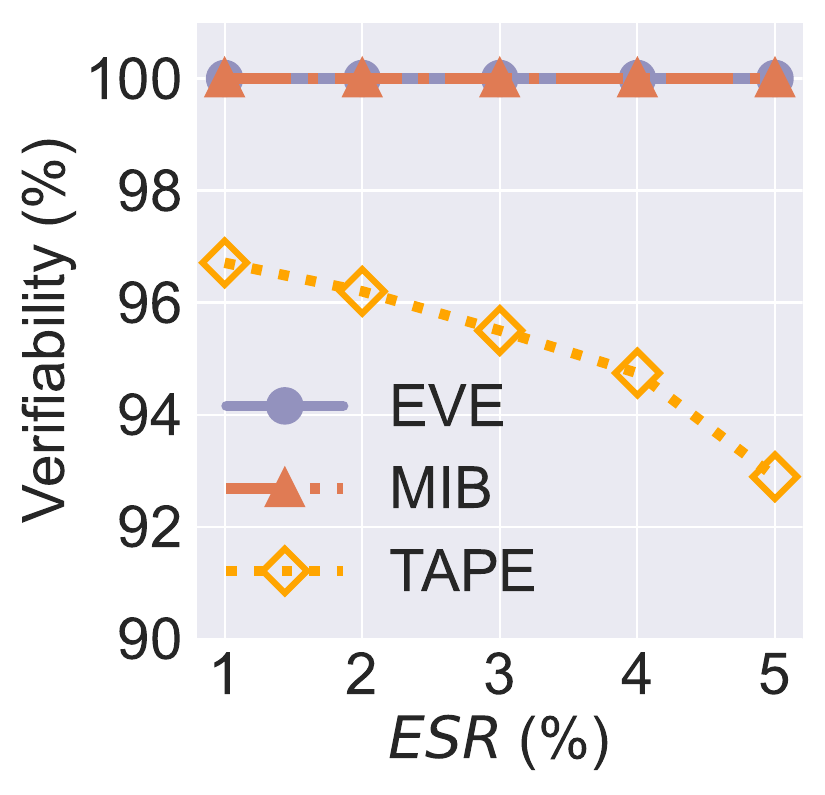}
		%\caption{An example of a subfigure.}
		\label{fig:cifar10verifiabilitysamplesizesimanalysis}
	}
	\subfloat{ 	 %
		\includegraphics[scale=0.28]{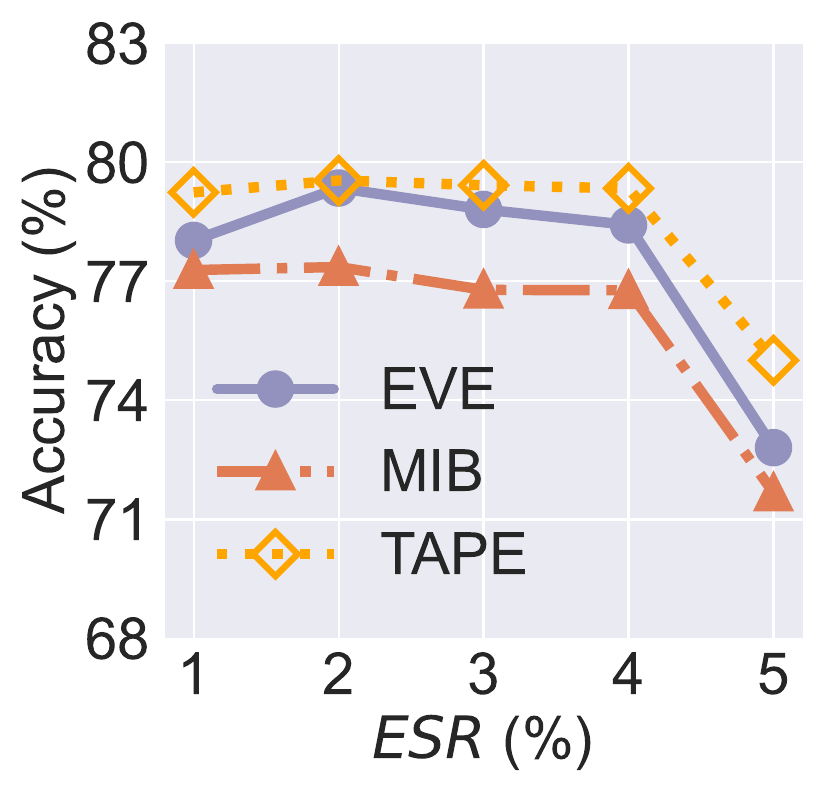}
		%\caption{An example of a subfigure.}
		\label{fig:cifar10recsimsamplesizesimanalysis}
	} \\ 	\vspace{-4mm}
	\subfloat{ 	\rotatebox{90}{  \hspace{10mm} { On CelebA} }
		\includegraphics[scale=0.28]{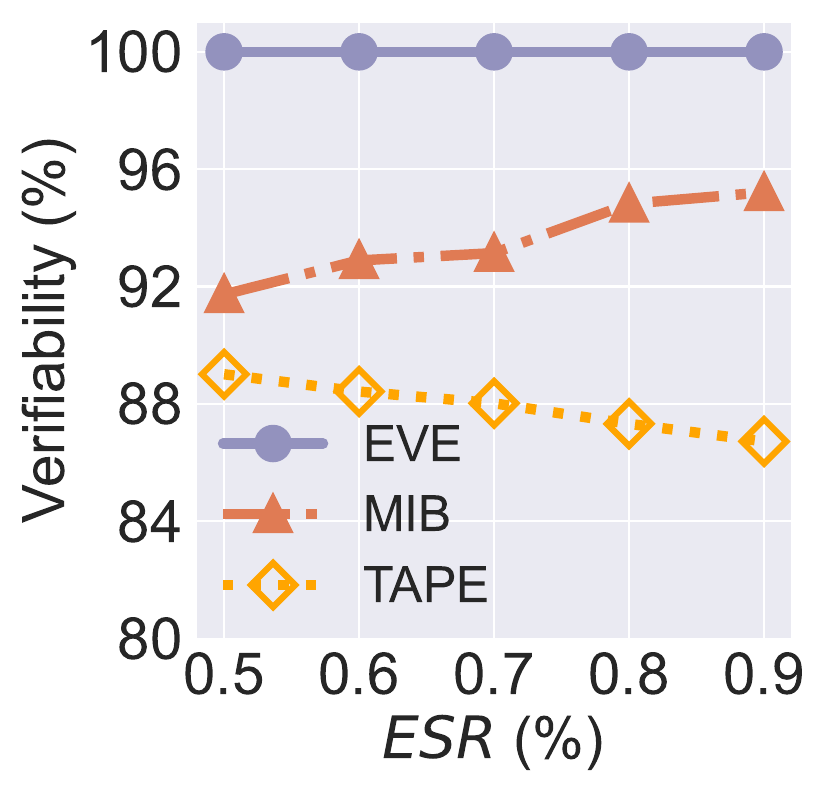}
		%	\caption{Another example of a subfigure.}
		\label{fig:celebaverifiabilitysamplesizesimanalysis}
	}
	\subfloat{ 	 
		\includegraphics[scale=0.28]{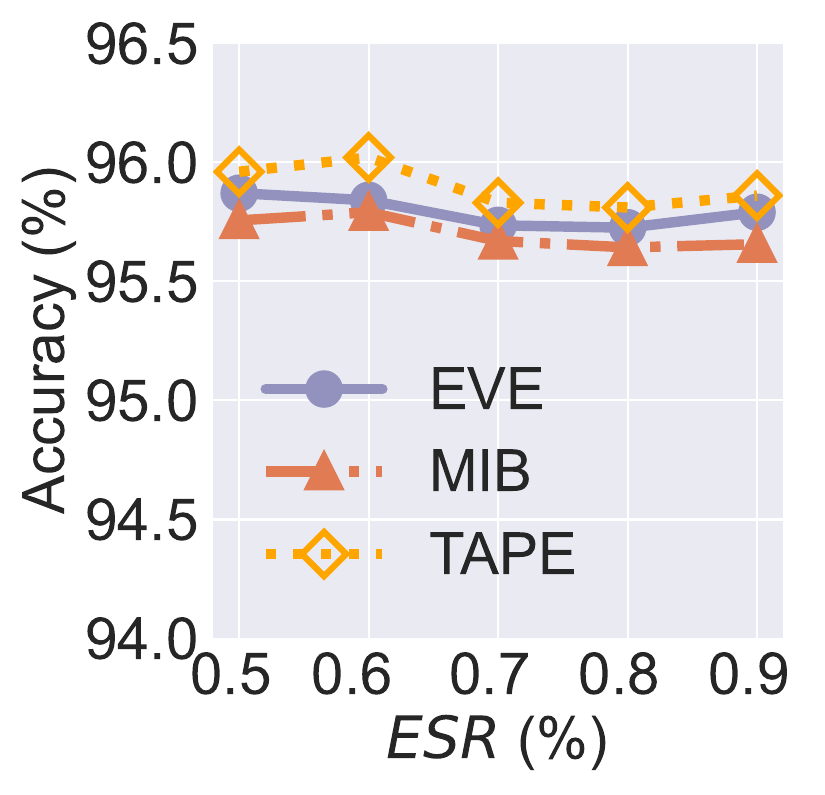}
		%	\caption{Another example of a subfigure.}
		\label{fig:celebarecsimsamplesizesimanalysis}
	} 
	\vspace{-2mm}
	\caption{Evaluations of impact about different $\text{\it ESR}$.} 
	\label{evaluation_of_in_or_not_in} 
\end{figure}

 \noindent
\textbf{Verification Effect and Functionality Preservation.} 
\Cref{evaluation_of_in_or_not_in} evaluates the impact of different {\it ESR} on the verifiability (left half) and the accuracy of the unlearned models (right half)  for EVE, MIB, and TAPE on MNIST, CIFAR10, and CelebA. EVE and MIB show near-perfect verifiability across all MNIST, CIFAR10 and CelebA, while the verifiability of TAPE drops significantly when {\it ESR} increases. It is because TAPE analyzes the unlearned information from model differences, while more erasure information included in one update model difference increases the difficulty for analysis. Since TAPE is also not involved in the unlearning process, it achieves the best model accuracy. EVE consistently maintains higher accuracy than MIB across all datasets and {\it ESR} values. Especially on MNIST, EVE keeps a near-constant accuracy (98\%) across all {\it ESR} levels, whereas MIB exhibits a slight decline at higher {\it ESR}s. Taken together, EVE uniquely delivers perfect verification of data removal without sacrificing predictive performance, while MIB offers moderate verification gains with stable accuracy, and TAPE prioritizes utility at the cost of weaker verifiability.

%Due the space limitations, we put the results on CelebA in \Cref{additional_eff}.

\iffalse
\begin{tcolorbox}[colback=white, boxrule=0.3mm]
	\noindent \textbf{Takeaway 2.} 
The running time of CAP slightly increases as {\it ESR} increases, but CAP still achieves significant efficiency improvement because it is independent of the original ML service model training. While larger {\it ESR} decreases the unlearned model utility, CAP provides similar or even better unlearning verification and accuracy than MIB. 
\end{tcolorbox}
\fi

 \begin{figure*}[t]
	\centering
	\subfloat{ 	%\rotatebox{90}{\hspace{-5mm} \scriptsize{ On MNIST} }
		\includegraphics[scale=0.3]{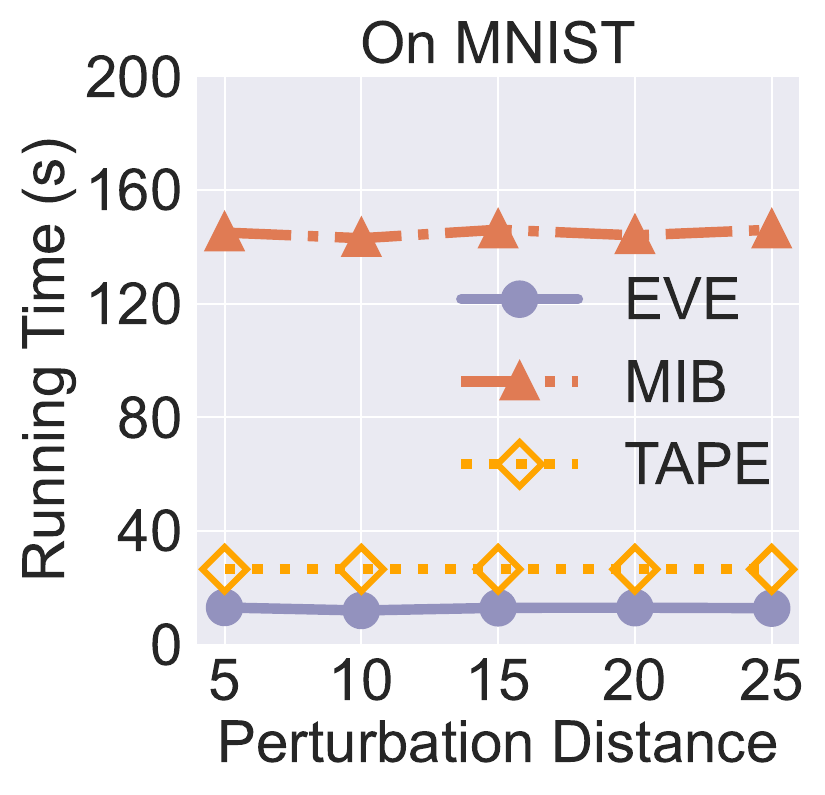}
		%\caption{An example of a subfigure.}
		\label{fig:mnistrunningtimenoiseanalysisbar}  
	} 
	\subfloat{ 	%\rotatebox{90}{ \hspace{-7mm}	\scriptsize{ On CIFAR10} }
		\includegraphics[scale=0.3]{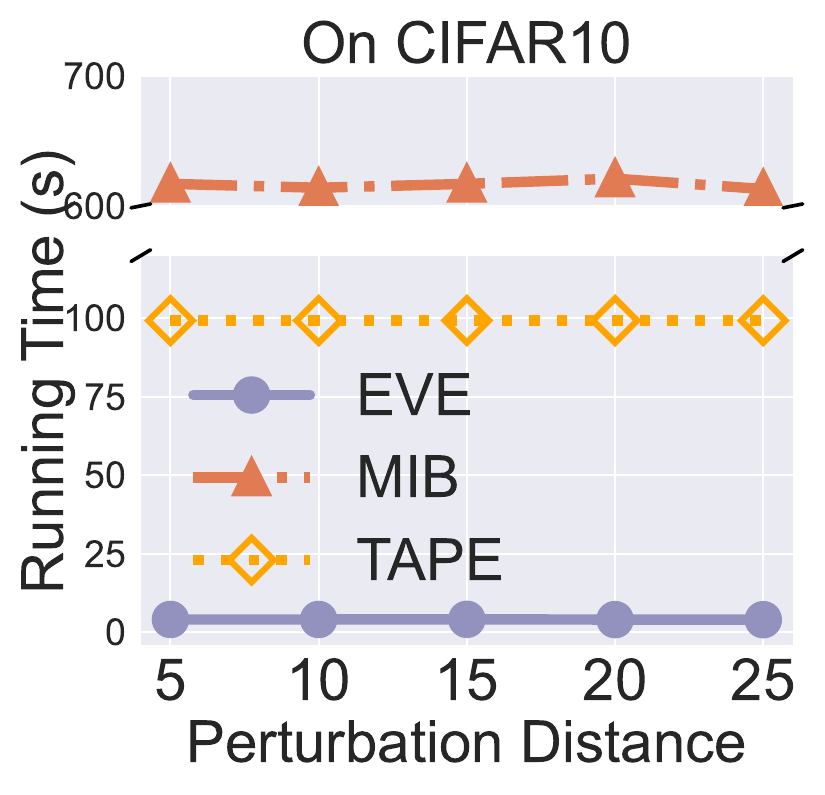}
		%\caption{An example of a subfigure.}
		\label{fig:cifar10runningtimenoiseanalysisbar} 
	}
	\subfloat{  %\rotatebox{90}{ \hspace{-3mm}	\scriptsize{ On STL-10} }
		\includegraphics[scale=0.3]{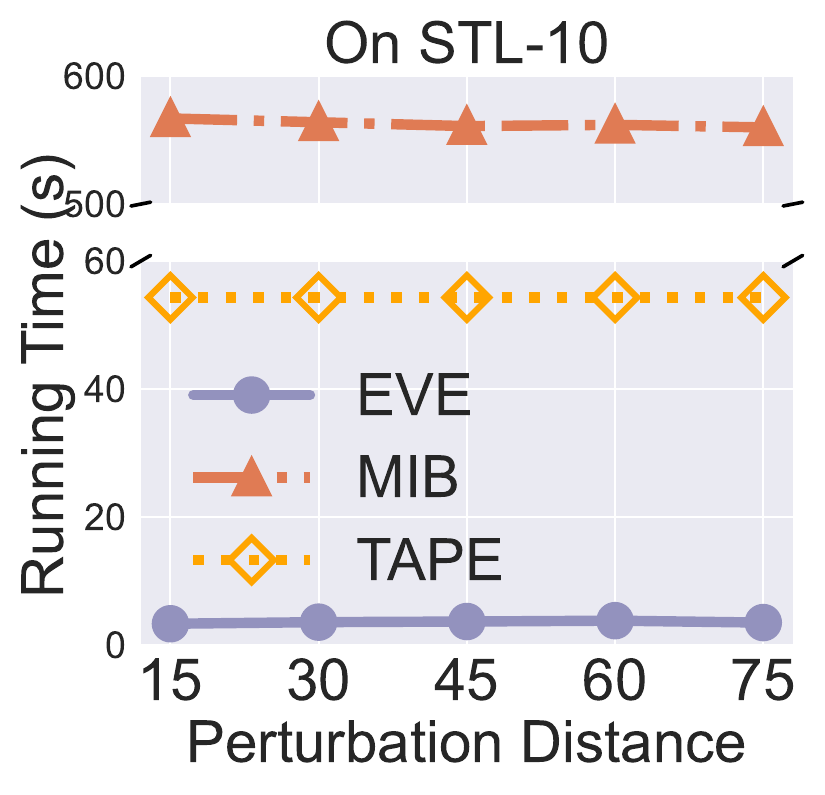}
		%\caption{An example of a subfigure.}
		\label{fig:stl10rtnoiseanalysisbar} 
	}
	\subfloat{ %\rotatebox{90}{ \hspace{-3mm}	\scriptsize{ On CelebA} }
		\includegraphics[scale=0.3]{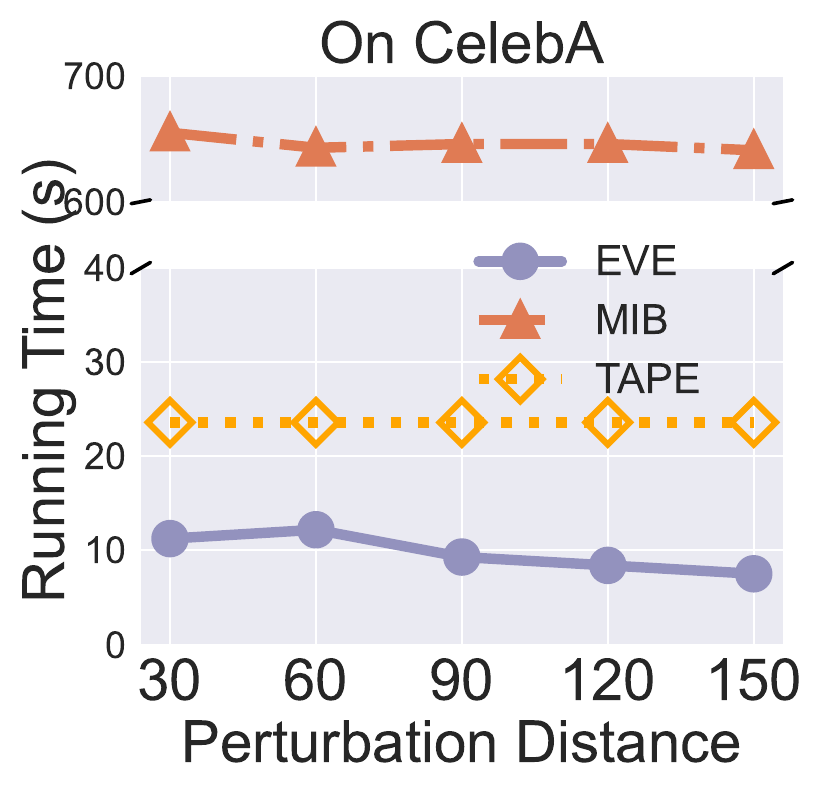}
		%\caption{An example of a subfigure.}
		\label{fig:celebarunningtimenoiseanalysisbar}
	} \vspace{-4mm}
	\\
	\subfloat{ 	
		\includegraphics[scale=0.3]{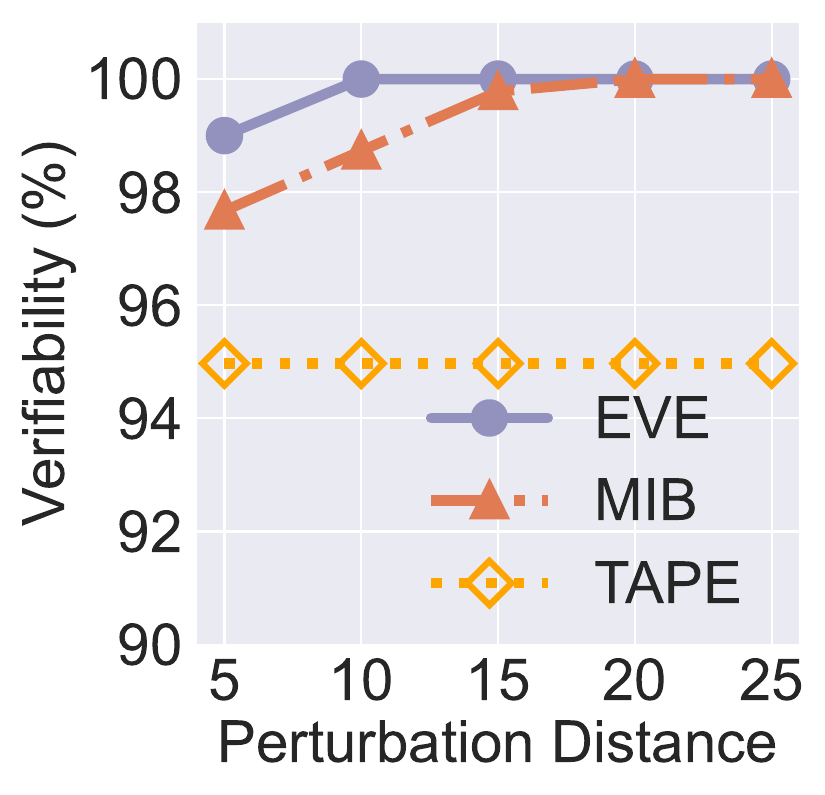}
		%	\caption{Another example of a subfigure.}
		\label{fig:mnistverifiabilitynoiseanalysis}
	} 
	\subfloat{ 
		\includegraphics[scale=0.3]{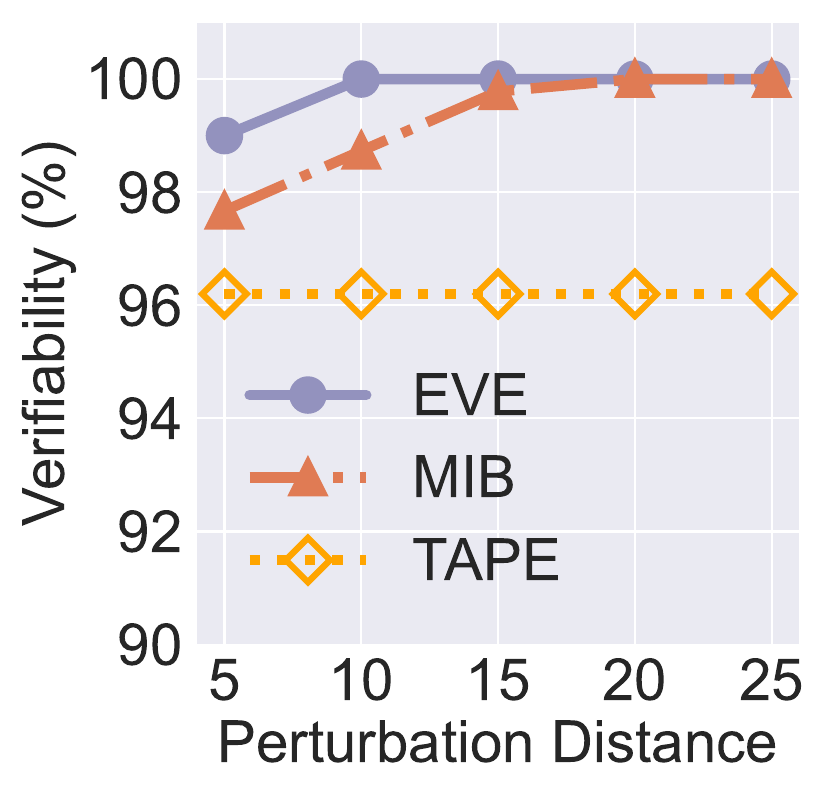}
		%	\caption{Another example of a subfigure.}
		\label{fig:cifar10verifiabilitynoiseanalysis}
	}
	\subfloat{ 
		\includegraphics[scale=0.3]{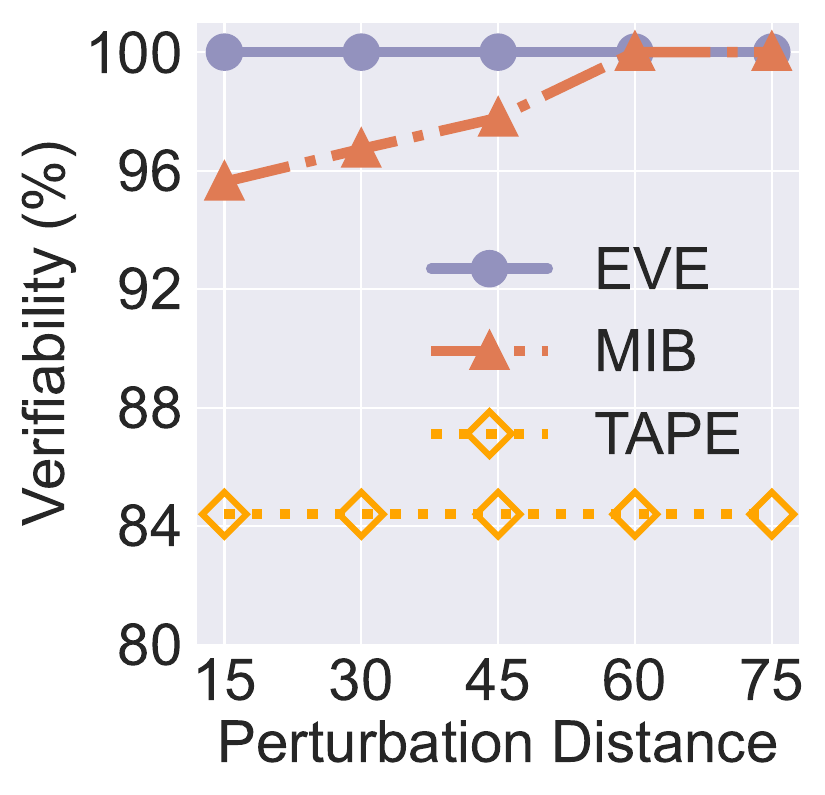}
		%	\caption{Another example of a subfigure.}
		\label{fig:stl10verifiabilitynoiseanalysis}
	}
	\subfloat{ 
		\includegraphics[scale=0.3]{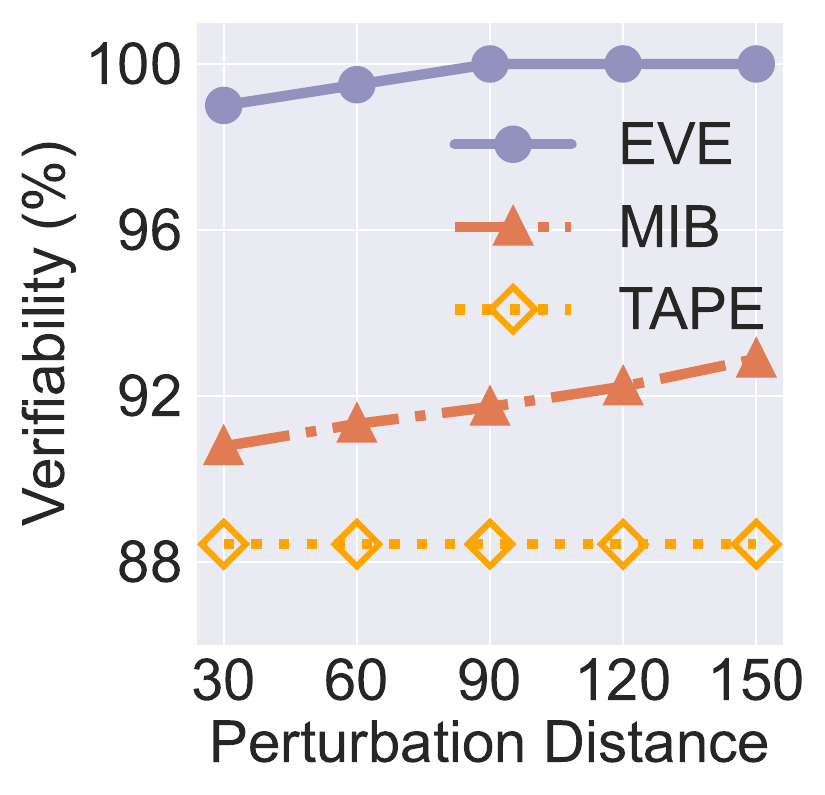}
		%	\caption{Another example of a subfigure.}
		\label{fig:celebaverifiabilitynoiseanalysis}
	} \vspace{-4mm}
	\\
	\subfloat{ 	
		\includegraphics[scale=0.3]{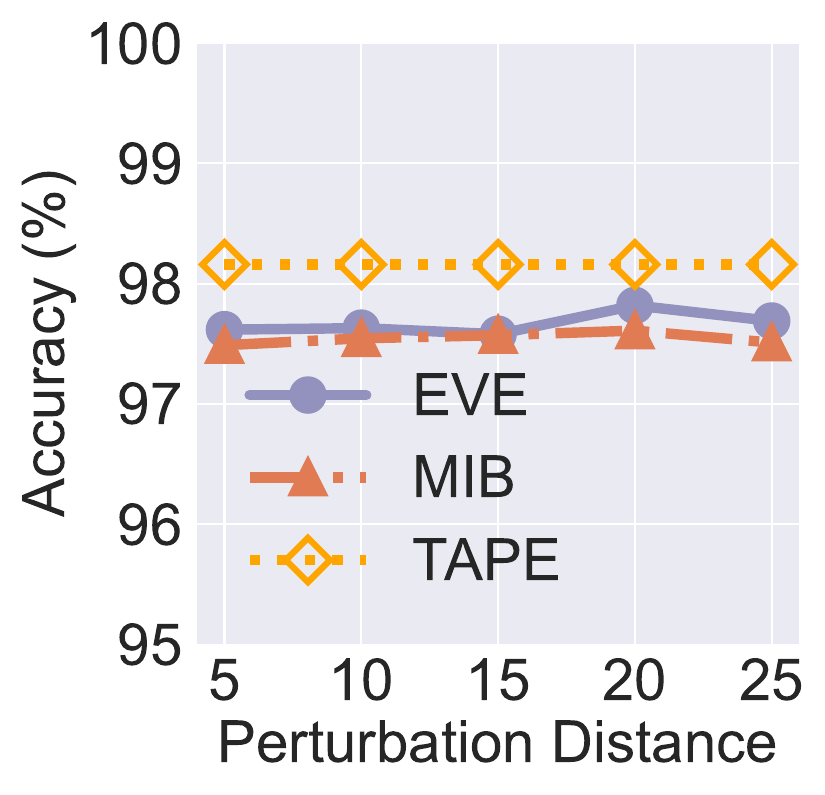}
		%\caption{An example of a subfigure.}
		\label{fig:mnistrecmsenoiseanalysis}
	}
	\subfloat{ 
		\includegraphics[scale=0.3]{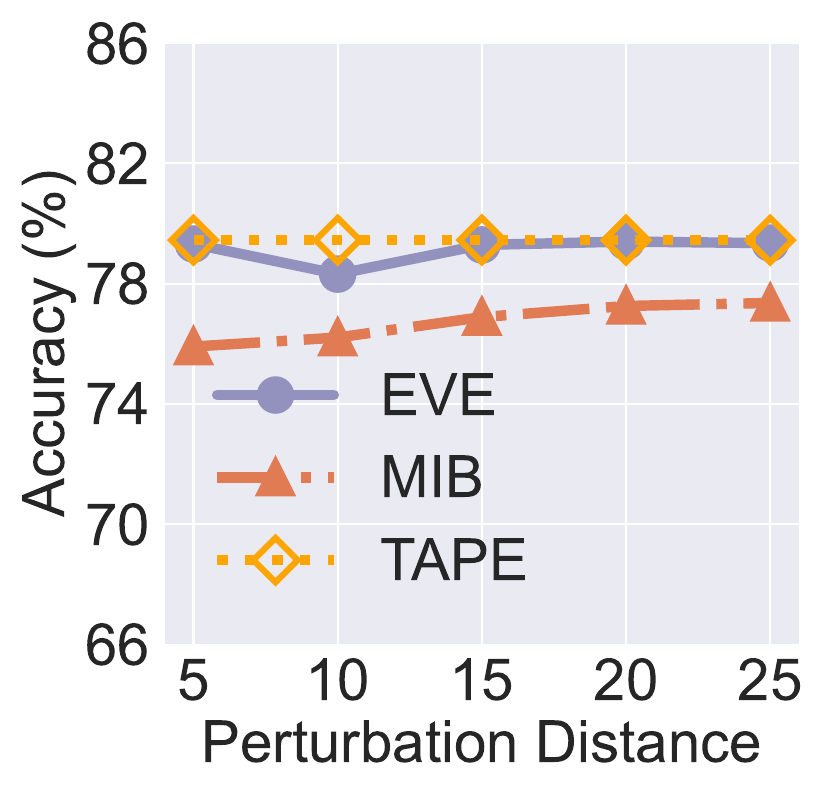}
		%\caption{An example of a subfigure.}
		\label{fig:cifar10recmsenoiseanalysis}
	}
	\subfloat{ 
		\includegraphics[scale=0.3]{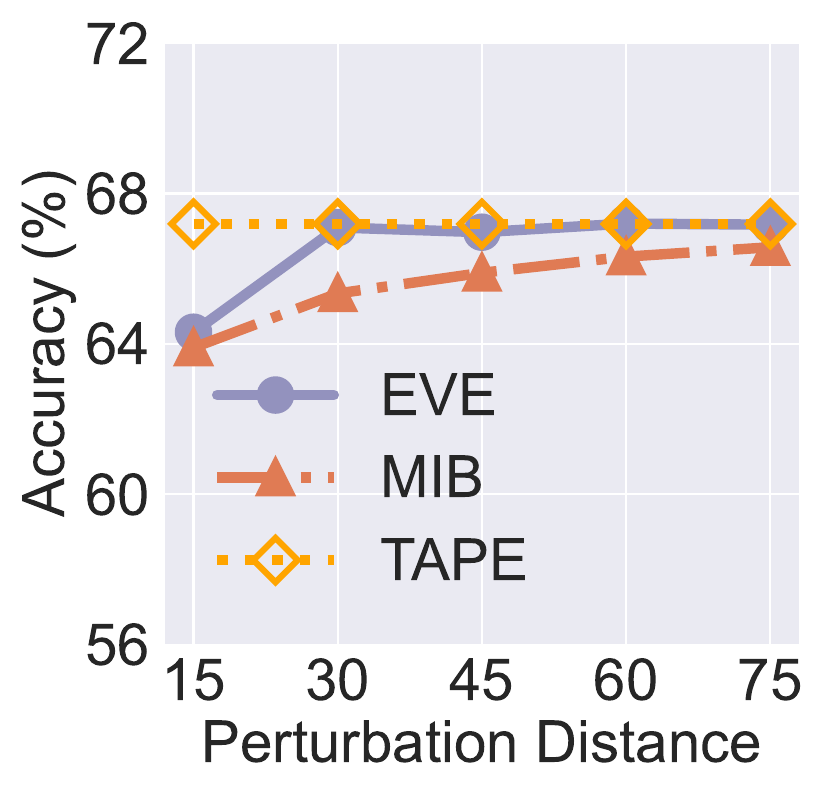}
		%\caption{An example of a subfigure.}
		\label{fig:stl10recmsenoiseanalysis}
	}
	\subfloat{ 
		\includegraphics[scale=0.3]{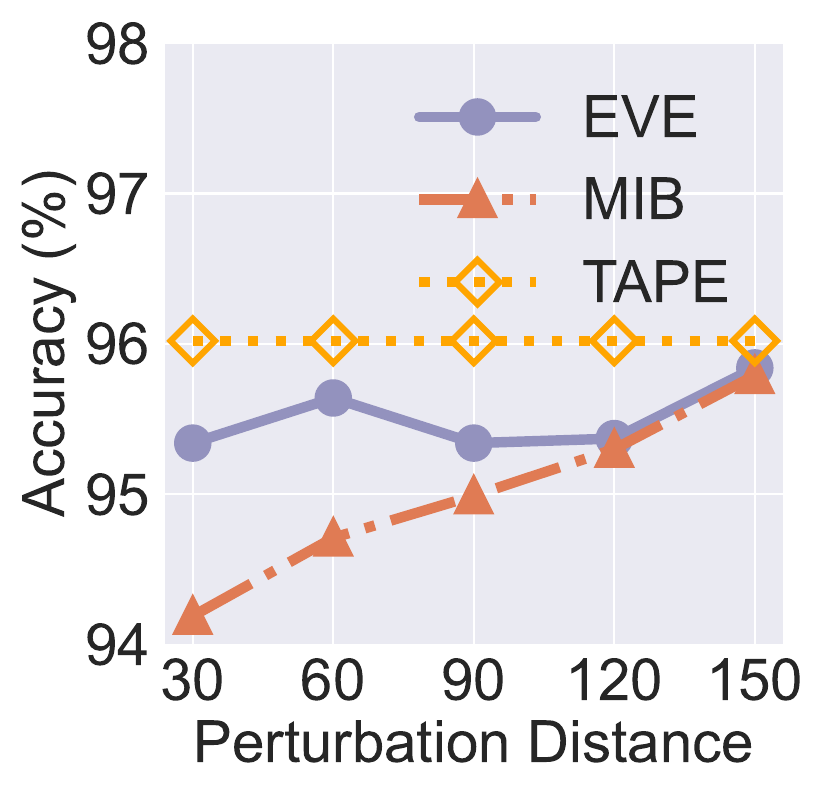}
		%\caption{An example of a subfigure.}
		\label{fig:celebarecmsenoiseanalysis}
	}
	\vspace{-2mm}
	\caption{Evaluations of the impact of the unlearned data perturbation distance $d$. } 
	\label{evaluation_of_noise} 
%	\vspace{-4mm}
\end{figure*}

\subsection{Ablation Study of Perturbation Distance} \label{adding_noise}

%In this section, we evaluate the impact of the perturbation of the unlearned data. We find that only perturbation without changing the labels as backdoor-based methods already significantly improves the information reconstruction and assists the verifiability of data removal.

\noindent
\textbf{Setup.} 
The customized perturbation algorithm outputs $D_{u,p} \gets (X_u + \delta_u, Y_u)$, where the perturbation is constrained by $\| \delta \|_{\infty} \leq d$. We set the perturbation distance $d$ between 0 and 25 for MNIST and CIFAR10, 0 and 75 for STL-10, and 0 and 150 for CelebA, with the range determined by the size of the dataset. We keep all other parameters fixed while varying the perturbation distance $d$. For the MIB method, we control the backdooring trigger patches with the same changing distance as our methods to ensure they can be compared. For the TAPE method, since it has no perturbation, we directly show the results of the fixed parameters. %\Cref{evaluation_of_noise} illustrates the effect of varying the unlearned data perturbation distance $d$ on running time, accuracy, and verifiability across four datasets.

\noindent
\textbf{Impact on Efficiency.} 
The running time plots (first row of~\Cref{evaluation_of_noise}) show that EVE consistently outperforms MIB and TAPE in terms of efficiency across all datasets, and EVE's performance remains relatively stable even as the perturbation distance increases. MIB consumes the most computation cost as it needs to involve the initial training process.

%\noindent
%\textbf{Relationship between Data Similarity and Perturbation Limit Value.} The first column in \Cref{evaluation_of_noise} shows the relationship between the perturbation limit $\alpha$ and data similarity, where the similarity is between the original image data $D_{u}$ and the perturbed data $D_{u, p}$. On all the datasets, adding more perturbation will decrease the similarity. 

\noindent
\textbf{Impact on Verification Effect and Functionality Preservation.} 
The second row of \Cref{evaluation_of_noise} illustrates the verifiability of unlearning as the perturbation distance increases. Higher perturbation distance definitely results in a better verification effect. Both EVE and MIB have better verifiability when the perturbation distance increases. EVE provides more reliable unlearning verification, consistently outperforming MIB and TAPE even under varying perturbation distances.

The third row of \Cref{evaluation_of_noise} highlights the influence of verification methods on unlearned models as the perturbation distance increases. Dislike perturbation for machine learning, more perturbation for erasure samples for unlearning preserves the unlearned model utility better. Both EVE and MIB have an accuracy increase when the perturbation increases. EVE demonstrates superior accuracy preservation than MIB across most datasets. Since TAPE doesn't involve any model training, it does not influence the unlearned model and achieves the best model accuracy.

\iffalse

\begin{tcolorbox}[colback=white, boxrule=0.3mm]
	\noindent \textbf{Takeaway 3.} 
	A larger unlearned data perturbation limitation $\alpha$ does not effectively influence the efficiency and verifiability but slightly increases the functionality preservation.
\end{tcolorbox}

\fi

	\section{Summary and Future Work} \label{s_a_fw}

In this paper, we propose the EVE method for verifying machine unlearning, focusing exclusively on the unlearning process. EVE works by perturbing the unlearning data to embed a verification signal into the unlearned model. This signal is that the unlearned model mispredicts target samples that the trained model had previously classified correctly. Users can utilize the changes of prediction for customized target samples to infer if their unlearning requests are conducted.

The EVE method proposed in this paper overcomes key limitations of existing unlearning verification approaches by eliminating the need for involvement in the original model training process. However, the limitation is that EVE is only suitable for approximate unlearning verification. Future research can build on this work by developing even more efficient unlearning verification methods suitable for broader scenarios.

%We formalize the perturbation generation as an adversarial optimization problem and introduce a perturbation descent method based on gradient matching to solve it. 

%Extensive experimental results demonstrate that CAP significantly improves efficiency compared to backdoor-based methods while also providing effective unlearning verification.

%, further enhancing the ability to guarantee and support the right to be forgotten in MLaaS environments.

	%%
	%% The acknowledgments section is defined using the "acks" environment
	%% (and NOT an unnumbered section). This ensures the proper
	%% identification of the section in the article metadata, and the
	%% consistent spelling of the heading.
	\begin{acks}
		The authors would like to thank the anonymous reviewers for their valuable comments and review. 
	\end{acks}
	
	%%
	%% The next two lines define the bibliography style to be used, and
	%% the bibliography file.
	\bibliographystyle{ACM-Reference-Format}
	\bibliography{main_EVA}

	%%
	%% If your work has an appendix, this is the place to put it.
	%\newpage
	\clearpage
	\appendix

\section{Proof of Theorem 1} \label{proof_of_theorem1}

%We give the proof of Theorem 1. We analyze under what conditions the unlearning users can reject the no-unlearning hypothesis $\mathcal{H}_{0}$ to claim that their was unlearned by the server.

We provide the proof for Theorem 1, where we analyze the conditions under which unlearning users can reject the null hypothesis, $\mathcal{H}_{0}$ (no-unlearning), to claim that their data has been successfully unlearned by the server.

\setcounter{theorem}{0}
\begin{theorem} 
	Given a target model $f(\cdot)$ in a classification task with $K$ classes, and $m$ queries to $f(\cdot)$, if the model's misprediction probability $\alpha$ for the target sample $f(x_t) != y_{t}$ satisfies the following formula:
	\begin{equation}
		\sqrt{m-1}  \cdot (\alpha - \beta) - \sqrt{\alpha -\alpha^2} \cdot t_{\tau}> 0,
	\end{equation}
	the unlearning user can reject the null hypothesis $\mathcal{H}_{0}$ at significance level $1-  \tau$, 
	where $\beta = \frac{K-1}{K}$ is the expected misprediction probability under random chance, and $t_{\tau}$ is the $\tau$ quantile of the $t$ distribution with $m-1$ degrees of freedom.
\end{theorem}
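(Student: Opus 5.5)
The plan is to model each of the $m$ queries as a Bernoulli trial and then apply a one-sample one-sided $t$-test. Let $X_j = \mathbb{I}(f(x_t^{(j)}) \neq y_t)$ for $j=1,\dots,m$, where the queries are treated as i.i.d.\ draws (either repeated target samples or the $T$ target samples viewed as exchangeable). Each $X_j$ is Bernoulli with success probability $\alpha$. The test of $\mathcal{H}_0\colon \alpha \le \beta$ against $\mathcal{H}_1\colon \alpha > \beta$ uses the statistic
\[
t_m = \frac{\sqrt{m}\,(\bar X - \beta)}{s},
\qquad
s^2 = \frac{1}{m-1}\sum_{j=1}^m (X_j - \bar X)^2 .
\]
Under the boundary null $\alpha = \beta$, and using the usual approximation that the sample mean is normal, $t_m$ follows (approximately) a $t$ distribution with $m-1$ degrees of freedom. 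We therefore reject $\mathcal{H}_0$ at significance level $1-\tau$ whenever $t_m > t_\tau$. The boundary $\alpha=\beta$ is the least favourable point of the composite null, so this one case controls the type I error over all of $\mathcal{H}_0$.

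The second step is to write the statistic in terms of the observed misprediction rate. Set $\bar X = \alpha$, the empirical verification probability the user reads off from the black-box queries, as the statement does. Because the $X_j$ are $0/1$, we have $\sum_j (X_j-\bar X)^2 = m(\alpha-\alpha^2)$, which gives $s^2 = \frac{m}{m-1}(\alpha-\alpha^2)$. Substituting,
\[
t_m = \frac{\sqrt{m}\,(\alpha-\beta)}{\sqrt{\tfrac{m}{m-1}(\alpha-\alpha^2)}} = \frac{\sqrt{m-1}\,(\alpha-\beta)}{\sqrt{\alpha-\alpha^2}} .
\]
The rejection condition $t_m > t_\tau$ then becomes $\sqrt{m-1}(\alpha-\beta) > \sqrt{\alpha-\alpha^2}\, t_\tau$, which is exactly the inequality in \Cref{theorem_1}, provided $0<\alpha<1$. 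In the degenerate cases $\alpha\in\{0,1\}$ the inequality reduces to $\alpha>\beta$, and I will treat these separately by noting that rejection is then trivial (or vacuous).

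The main obstacle is justifying the $t$-distribution. The $X_j$ are Bernoulli, not Gaussian, so the $t_{m-1}$ law is only asymptotic or approximate. I will handle this by stating that the test is the standard one-sample $t$-test \cite{montgomery2010applied}, invoking the central limit theorem for $\bar X$ and Slutsky's lemma for $s$. This makes the significance level hold asymptotically in $m$; the statement's phrasing ("can reject") only needs the test's decision rule, not an exact finite-sample guarantee. Finally, since $\beta=\frac{K-1}{K}$ and the pre-unlearning model's misprediction rate is below $\beta$, rejecting $\mathcal{H}_0$ certifies that the unlearned model behaves differently from the trained one, which is the interpretation given after the theorem.
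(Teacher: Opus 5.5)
Your proposal is correct and follows essentially the same route as the paper: Bernoulli modelling of the $m$ query outcomes, a CLT-based normal approximation, the one-sample statistic $\sqrt{m}(\alpha-\beta)/s$, the simplification $s^2=\frac{m}{m-1}(\alpha-\alpha^2)$ via $X_j^2=X_j$, and substitution into $T>t_\tau$. Your extra points are refinements the paper omits: the least-favourable boundary of the composite null, the merely approximate $t_{m-1}$ law, and the degenerate cases $\alpha\in\{0,1\}$ where $s=0$, which is exactly the $100\%$-verifiability regime in the experiments; just keep the true Bernoulli parameter notationally separate from the empirical rate $\alpha$, as the paper does with $q$.
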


\begin{proof}
	Consider a target model $f(\cdot)$ in a classification task with $K$ classes. Let $x_t$ be the verification query target, we denote the misprediction results as $\mathcal{R}$, which is a random variable and follows the binomial distribution
	\begin{equation}
		\mathcal{R} \sim B(1, q),
	\end{equation}	
	where $q = \Pr (f(x_t) != y_t)$ representing the misprediction probability for the target sample. The data owner uses multiple target samples $x_1, x_2, ..., x_m$ to query the target model and receives their misprediction results $\mathcal{R}_1, \mathcal{R}_2, ..., \mathcal{R}_m$. The total misprediction rate is calculated as follows:
	\begin{equation}
		\alpha = \frac{\mathcal{R}_1 + \mathcal{R}_2 + \cdot\cdot\cdot + \mathcal{R}_m}{m}.
	\end{equation}
	Because $\mathcal{R}_1, \mathcal{R}_2, ..., \mathcal{R}_m$ are iid random variables, $\alpha$ follows the distribution:
	\begin{equation}
		\alpha \sim \frac{1}{m} B(m, q).
	\end{equation}
	According to the Central Limit Theorem (CLT) \cite{montgomery2010applied}, when $m \geq 30$, $\alpha$ follows the normal distribution:
	\begin{equation}
		\alpha \sim \mathcal{N} (q, \frac{q \cdot (1-q)}{m}).
	\end{equation}
	Because $\alpha$ follows a normal distribution, we can use a T-test \cite{montgomery2010applied} to determine if $q$ is significantly different from the original model prediction. We construct the t-statistic as follows:
	\begin{equation}
		T=\frac{ \sqrt{m}(\alpha - \beta )}{s},
	\end{equation}
	where $\beta = \frac{K-1}{K}$, and $s$ is the standard deviation of $\alpha$. $s$ is calculated as follows:
	\begin{equation}
		\begin{aligned}
			s^2 &= \frac{1}{m-1} \sum_{i=1}^{m} (\mathcal{R}_i - \alpha)^2 \\
			& = \frac{1}{m-1} (\sum_{i=1}^m \mathcal{R}_i^2 -\sum_{i=1}^m 2\mathcal{R}_i \cdot \alpha + \sum_{i=1}^m \alpha^2 ) \\
			& = \frac{1}{m-1} (\sum_{i=1}^m \mathcal{R}_i^2  - 2m \cdot \alpha^2 + m \cdot \alpha^2) \\
			& = \frac{1}{m-1} (\sum_{i=1}^m \mathcal{R}_i^2  - m \cdot \alpha^2).
		\end{aligned}
	\end{equation}
	Since the misprediction query is Binary (0 or 1), we have $\mathcal{R}_i^2= \mathcal{R}_i$. Then, we have 
	\begin{equation} \label{s_eq}
		\begin{aligned}
			s^2 & = \frac{1}{m-1} (m \cdot \alpha - 2m \cdot \alpha^2 + m \cdot \alpha^2) \\
			& = \frac{1}{m-1} (m \cdot \alpha - m \cdot \alpha^2).
		\end{aligned}
	\end{equation}
	Under the null hypothesis, the T statistic follows a t-distribution with $m-1$ degree of freedom. At the significant level $1 - \tau$, if the following inequality formula holds, we can reject the null hypothesis $\mathcal{H}_{0}$:
	\begin{equation} \label{leq_eq}
		\frac{\sqrt{m} (\alpha - \beta)}{ s} > t_{\tau},
	\end{equation}
	where $ t_{\tau}$ is the ${\tau}$ quantile of the $t$ distribution with $m-1$ degrees of freedom. Combine \Cref{s_eq} and \Cref{leq_eq}, we can get 
	\begin{equation}
		\sqrt{m-1}  \cdot (\alpha - \beta) - \sqrt{\alpha -\alpha^2} \cdot t_{\tau}> 0,
	\end{equation}
	which concludes the proof.

\end{proof}

\section{Detailed Introduction of Datasets} \label{datasets_appendix}

\begin{table}[h]
	\scriptsize
	\caption{Dataset statistics.}
	\label{dataset_table}
%	\vspace{-3mm}
	\resizebox{\linewidth}{!}{
		\setlength\tabcolsep{7.5pt}
		\begin{tabular}{cccc}
			\toprule[0.8pt]
			Dataset & Feature Dimension  & \#. Classes & \#. Samples \\
			\midrule
			MNIST & 28×28×1 & 10 & 70,000  \\  
			\rowcolor{verylightgray}
			CIFAR10 & 32×32×3 & 10 & 60,000  \\  
			STL-10 & 96x96x3 &10 & 5000 \\
			\rowcolor{verylightgray}
			CelebA & 178×218×3 & 2 (Gender) & 202,599 \\
			\bottomrule[0.8pt]
	\end{tabular}}
\end{table}

The statistics of all datasets used in our experiments are listed and introduced in \Cref{dataset_table}. Our experiment on CelebA is to identify the gender attributes of the face images. The task is a binary classification problem, different from the ones on MNIST, CIFAR10 and STL-10. We also introduce them below

\begin{itemize}
	\item \textbf{MNIST.} MNIST contains 60,000 handwritten digit images for the training and 10,000 handwritten digit images for the testing. All these black and white digits are size normalized, and centered in a fixed-size image with 28 × 28 pixels.
	\item \textbf{CIFAR10.} CIFAR10 dataset consists of 60,000 32x32 colour images in 10 classes, with 6,000 images per class. There are 50,000 training images and 10,000 test images.
	\item \textbf{STL-10.} STL-10 dataset consists of 13,000 color images with 5,000 training images and 8,000 test images. STL-10 has 10 classes of airplanes, birds, cars, cats, dear, dogs, horses, monkeys, ships, and trucks with each image having a higher resolution of 96x96 pixels. Compared to the above two datasets, STL-10 can be considered as a more challenging dataset with higher learning complexity.
	\item  \textbf{CelebA.} CelebA is a large-scale face attributes dataset with more than 200,000 celebrity images, each with 40 attribute annotations, and the size of each image is 178×218.
\end{itemize}

%\section{Benchmarks} \label{benchmarks_intro}

% We briefly summarize these unlearning benchmarks as follows.

\begin{table}[h]
	\scriptsize
	\caption{Evaluating mixing other users' unlearned data in the unlearning operation on CIFAR10. }
	\label{Evaluating_mixing}
	\resizebox{\linewidth}{!}{
		\setlength\tabcolsep{7.5pt}
		\begin{tabular}{c|c|c|c|c|c|c}
			\toprule[0.8pt]
			\midrule
			On CIFAR10 & Add 0\% & 20\% & 40\% & 60\% & 80\% & 100\% \\
			\midrule
			Running time (s)      & 3.93     & 4.32  &  4.74  &  5.19  & 5.63  &  6.10  \\  
			\rowcolor{verylightgray}
			Unl. Verifiability (\%)    & 100.00\%       & 100.00\%   & 100.00\%   &  100.00\%  &  100.00\%  &  100.00\%  \\  
			Accuracy of unelarned model (\%)     &  79.35\%        &  78.34\%     &  74.87\%     &  75.38\%  &  76.33\%  & 75.78\% \\
			\midrule
			\bottomrule[0.8pt]
	\end{tabular}}
\end{table}

\begin{figure*}[t]
	\centering
	\subfloat{ 	
		\includegraphics[scale=0.4]{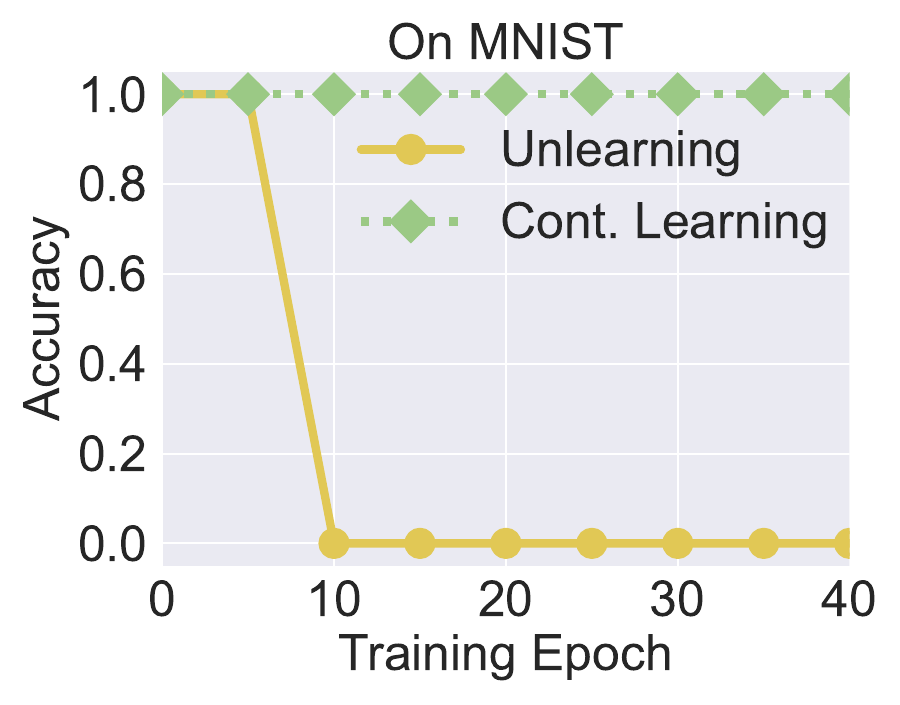}
		%\caption{An example of a subfigure.}
		\label{fig:mnisttrainingaccshift}
	}
	\subfloat{ 	
		\includegraphics[scale=0.4]{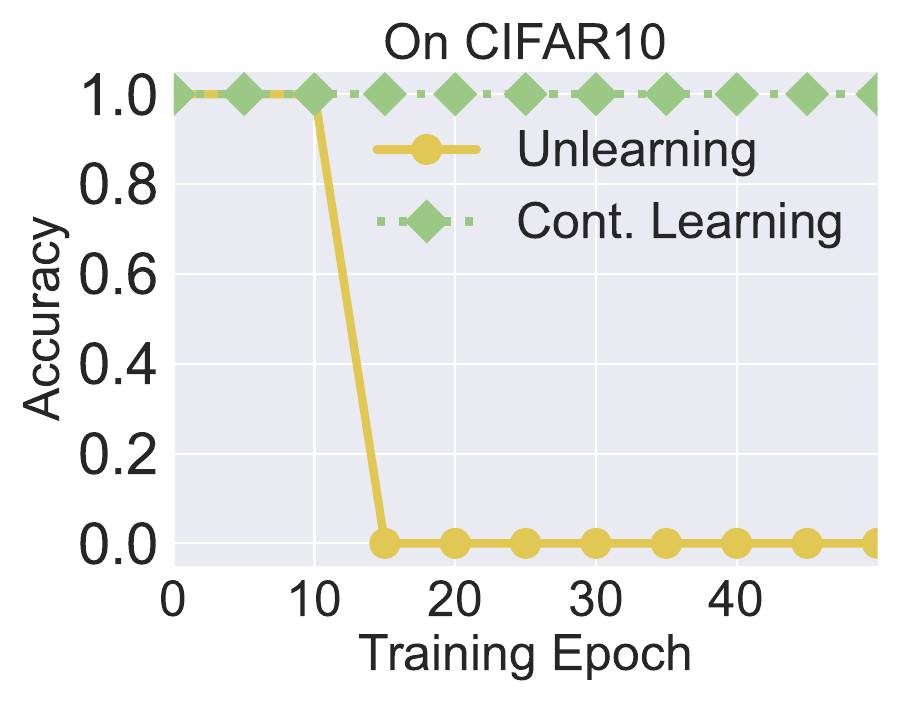}
		%\caption{An example of a subfigure.}
		\label{fig:cifar10trainingaccshift} 
	}
	%	\vspace{-6mm}
	\caption{Model decision boundary shift for the target sample.} 
	\label{acc_shift_for_unl} 
	%	\vspace{-4mm}
\end{figure*}

\section{Additional Experimental Results}

\subsection{Additional Experimental Results about Scalability}   \label{add_experiment_of_scala}

To demonstrate the scalability of our method when other users upload unlearning requests at the same time, we additionally conduct experiments that mix other users' unlearning data in one unlearning request for the unlearning operation. In this experiment, we set the ESR=2\% for our unlearning verification user, and we mix additional samples from 0\% to 100\% compared to the uploaded samples of the unlearning verification user. We present the experimental results on CIFAR10 in \Cref{Evaluating_mixing}.

The results demonstrate that mixing the unlearned samples of other users in the unlearning operation will degrade the model utility as more samples are unlearned. These results also show that the verification method still works effectively when mixing the same size of additional unlearning samples as the verification user's uploaded dataset.

\subsection{Exclusive for Unlearning Method }

We also evaluate whether the perturbed data is exclusively effective for unlearning methods in \Cref{acc_shift_for_unl}. We test the perturbed samples using unlearning and continual learning methods and record the accuracy of the target samples. Results in \Cref{acc_shift_for_unl} on both MNIST and CIFAR10 show that the decision boundary shifts only when unlearning methods are applied to these perturbed samples. In contrast, the continual learning methods on these samples will not change the model decision boundary. The reason is that the perturbation has not dramatically changed the information of the original samples, and we have not changed the labels like backdoor methods.

%\subsection{Additional Evaluation on Various \textit{ESR} } \label{additional_eff}

\end{document}